\newif\ifarxiv
\newtheorem{theorem}{Theorem}[section]
\newtheorem{corollary}[theorem]{Corollary}
\newtheorem{definition}[theorem]{Definition}
\newtheorem{observation}[theorem]{Observation}
\newcommand{\maxiteration}{\ensuremath{\mathsf{MaxIteration}}}
\newcommand{\approxv}{\ensuremath{\hat{V}}}
\newcommand{\cM}{\ensuremath{\mathcal{M}}}
\newcommand{\muz}{\ensuremath{\mu_0}}
\newcommand{\pitie}{\ensuremath{\Pi^{k^{*}}}_{\beta}}
\newcommand{\approxpi}{\ensuremath{\hat{\pi}}}
\newcommand{\states}{\mathcal{S}}
\newcommand{\actions}{\mathcal{A}}
\newcommand{\rew}{R}
\newcommand{\transitions}{P}
\newcommand{\E}{\mathop{\mathbb{E}}}
\newcommand{\distover}[1]{\Delta(#1)}
\newcommand{\cO}{\mathcal{O}}
\DeclareMathOperator*{\argmax}{argmax}
\DeclareMathOperator{\tr}{tr}
\renewenvironment{abstract}%
{%
  \vskip 0.075in%
  \centerline%
  {\large\bf Abstract}%
  \vspace{0.5ex}%
  \begin{quote}%
}
{
  \par%
  \end{quote}%
  \vskip 1ex%
}
\title{Oracle-Efficient Reinforcement Learning \\ for Max Value Ensembles}
\author{Marcel Hussing}
\author{Michael Kearns}
\author{Aaron Roth}
\author{Sikata Sengupta}
\author{Jessica Sorrell}
\affil{University of Pennsylvania}
\begin{document}
\maketitle

\begin{abstract}

Reinforcement learning (RL) in large or infinite state spaces is notoriously challenging,
both theoretically (where worst-case sample and computational complexities must scale with state space cardinality)
and experimentally (where function approximation and policy gradient techniques often scale poorly and suffer from
instability and high variance). One line of research attempting to address these difficulties
makes the natural assumption that we are given a collection of heuristic base or \emph{constituent}
policies upon which we would like to improve in a scalable manner. In this work we aim to compete
with the \emph{max-following policy}, which at each state follows the action of whichever constituent 
policy has the highest value. The max-following policy is always at least as good as the best constituent
policy, and may be considerably better. Our main result is an efficient algorithm
that learns to compete with the max-following policy, given
only access to the constituent policies (but not their value functions). In contrast to prior work in
similar settings, our theoretical results require only the minimal assumption of an ERM oracle for
value function approximation for the constituent policies 
(and not the global optimal policy or the max-following policy itself)
on samplable distributions. We illustrate our algorithm's experimental effectiveness and behavior
on several robotic simulation testbeds.
\end{abstract}

\section{Introduction}\label{sec:intro}

Computationally efficient RL algorithms are known for simple environments with small state spaces
such as tabular Markov decision processes (MDPs)
~\citep{kearns2002near,brafman2002r}, but practical applications often require dealing  
with large or even infinite state spaces. 
Learning \emph{efficiently} in these cases requires computational complexity independent of the state space, but this is statistically impossible without strong assumptions on the class of MDPs~\citep{jaksch10aregret,lattimore2012pacbounds,du2019good,domingues2021episodic}. 
Even in structured MDPs that admit statistically efficient algorithms, learning an optimal policy can still be computationally intractable~\citep{kane2022computational,golowich2024exploration}.

These obstacles to practical RL motivate the study of ensembling methods~\citep{lee2021sunrise,peer2021ensemble,chen2021randomized,hiraoka2022dropout}, which assume access to multiple sub-optimal policies for the same MDP and aim to leverage these constituent policies to improve upon them. There are now several provably efficient ensembling algorithms, but their guarantees require strong assumptions on the representation of the target policy learned by the algorithm. \citet{brukhim2022boosting} use the boosting framework for ensembling developed in the supervised learning setting~\citep{freund1997decision} to learn an optimal policy, assuming access to a weak learner for a parameterized policy class. 
To efficiently converge to an optimal policy, the target policy must be expressible as a depth-two circuit over policies from a base class which is efficiently weak-learnable. The convergence guarantees additionally require strong bounds on the worst-case distance between state-visitation distributions of the target policy and policies from the base class. 

Another line of ensembling work considers a weaker objective than learning an optimal policy~\citep{cheng2020policy,liu2023active, liu2024blending}. These works instead aim to learn a policy competitive with a \emph{max-aggregation policy}, which take whichever action maximizes the advantage function with respect to a max-following policy at the current state. When these works have provable guarantees, they require the assumption that the target max-aggregation policy can be approximated in an online-learnable parametric class, as well as the assumption that policy gradients within the class can be efficiently estimated with low variance and bias. 

Our goal is to learn a policy competitive with a similar but incomparable benchmark to that of~\citet{cheng2020policy} under comparatively weak assumptions. We give an efficient algorithm for learning a policy competitive with a \emph{max-following policy} (Definition~\ref{def:max-following}), assuming the learner has access to a squared-error regression oracle for the value functions of the constituent policies. Our algorithm exclusively queries this oracle on distributions over states that are efficiently samplable, thereby reducing the problem of learning a max-following competitive policy to supervised learning of value functions. Notably, our learnability assumptions pertain only to the value functions of the constituent policies and not to the more complicated class of max-following benchmark policies or their value functions. Our algorithm is simple and effective, which we demonstrate empirically in Section~\ref{sec:experiments}.

It is natural to wonder if access to an oracle such as ours could be leveraged to instead efficiently learn an optimal policy, obviating the need for weaker benchmarks (and our results). However, it was recently shown by~\citep{golowich2024exploration} that learning an optimal policy in a particular family of block MDPs is computationally intractable under reasonable cryptographic assumptions, even when the learner has access to a squared-error regression oracle. Their oracle captures a general class of regression tasks that includes value function estimation, and therefore also captures our oracle assumption. 
Our work shows that when we instead consider the simpler objective of efficiently learning a policy that competes with max-following, a regression oracle is in fact sufficient. We leave open the interesting question of whether such an oracle is necessary.

\subsection{Results}\label{ssec:results}
Our main contribution is a novel algorithm for improving upon a set of $K$ given policies that is oracle efficient with respect to a squared-error regression oracle, and therefore scalable in large state spaces (Algorithm~\ref{alg:maxiteration}, Theorem~\ref{thm:maxiteration-strong}). We consider the episodic RL setting in which the learner interacts with its environment for episodes of a fixed length $H$. The algorithm incrementally constructs an improved policy over $H$ iterations, learning an improved policy for step $h \in [H]$ of the episode at iteration $h$. This incremental approach allows the algorithm to explicitly construct efficiently samplable distributions over states visited by the improved policy at step $h$ by simply executing the current policy for $h$ steps. It can then query its oracle to obtain approximate value functions for all constituent policies with respect to this distribution. This in turn allows the algorithm to learn an improved policy for step $h+1$ by following the policy with highest estimated value. By incrementally constructing an improved policy over steps of the episode, we can avoid making assumptions like those of~\cite{brukhim2022boosting} about the overlap between state-visitation distributions of the target policy and the intermediate policies constructed by the algorithm. 

Because our oracle only gives us approximate value functions, we take as our benchmark class the set of \emph{approximate max-following policies} (Definition~\ref{def:tie-breaking}). This is a superset of the class of max-following policies and contains all policies that at each state follow the action of some constituent policy with near-maximum value at that state. In Section~\ref{sec:benchmark}, we prove that for any set of constituent policies, the worst approximate max-following policy is competitive with the best constituent policy (Lemma~\ref{lem:approx-max-vs-fixed}) and provide several example MDPs illustrating how our benchmark relates to other natural benchmarks.

Finally, we demonstrate the practical feasibility of our algorithm using a heuristic version on a set of robotic manipulation tasks from the CompoSuite benchmark~\cite{mendez2022composuite, hussing2023robotic}. We demonstrate that in all cases, the max-following policy we find is at least as good as the constituent policies and in several cases outperforms it significantly.

\subsection{Related work}\label{ssec:related}

As discussed above, our work is related to a recent line of research learning a max-aggregation policy~\citep{cheng2020policy, liu2023active, liu2024blending}, which can be viewed as a one-step look-ahead max-following policy and is incomparable to the class of max-following policies (see the appendix of~\citet{cheng2020policy} for example MDPs demonstrating this fact). These works all assume online learnability of the target policy class, which is strictly stronger than our batch learnability assumption for constituent policy value functions. 

The work of~\citet{cheng2020policy} proposes an algorithm (MAMBA) that uses policy gradient methods, and the convergence of the learned policy to their benchmark depends on the bias and variance of those policy gradients. 
\citet{liu2023active,liu2024blending} builds on the work of \citep{cheng2020policy}. Their algorithm MAPS-SE modifies MAMBA to promote exploration when there is uncertainty about which constituent policy has the greatest value at a state, via an upper confidence bound (UCB) approach to policy selection. Reducing uncertainty about the constituent policies' value functions reduces the bias and variance of the gradient estimates, improving convergence guarantees. However, policy gradient techniques are known to generally have high variance~\citep{wu2018variance}, and this appears to affect the practical performance of MAPS-SE in certain cases (see Section~\ref{sec:experiments} for additional discussion). 

 The boosting approach to policy ensembling of~\citet{brukhim2022boosting} also necessitates strong assumptions. This follows from the computational separation in~\cite{golowich2024exploration}, which shows that our oracle assumption is insufficient to learn an optimal policy, whereas the assumptions made in~\citet{brukhim2022boosting} enable convergence to optimality. This work also gives convergence guarantees that are independent of any relationship between the starting state distribution, the state-visitation distributions of the base policy class, and the state-visitation distribution of the target policy, whereas bounds on the closeness of these distributions is required for convergence in~\cite{brukhim2022boosting}.

There are other lines of work on policy improvement, which consider improving upon a single base policy and therefore do not address the challenge of ensembling~\citep{sun2017deeply,schulman2015high,chang2015learning}. Empirical work on ensemble imitation learning (IL) also studies the problem of leveraging multiple base policies for learning~ \citep{li2018oil,kurenkov2019ac}, but these works lack provable guarantees of efficient convergence to a meaningful benchmark. 

\citep{song2023ensemble} provide a survey of a variety of more complex techniques to ensemble policies, mainly from a practical perspective. \citet{barreto2017succesor, barreto2020fast} decompose complex tasks into a set of multiple smaller tasks where they use transfer learning, but they make strong assumptions about the joint parametrization of rewards for various tasks and about the representations of the tasks.

\section{Preliminaries}\label{sec:prelims}

We consider an episodic fixed-horizon Markov decision process (MDP)~\citep{puterman1994markov} which we formalize as a tuple $\cM =(\states, \actions, \rew, \transitions, \muz, H)$ where $\states$ is the set of states, $\actions$ the set of actions, $\rew$ is a reward function, $\transitions$ the transition dynamics, $\muz$ a distribution over starting states and $H$ the horizon \citep{sutton2018reinforcement}. $[N]$ will denote the set $\{0,...,N-1\}$. In the beginning, an initial state $s_0$ is sampled from $\muz$. At any time $h \in [H]$, the agent is in some state $s_h \in \states$ and chooses an action $a_h \in \actions$ based on a function $\pi_h$ mapping from states to distributions over actions $\Pi: \states \mapsto \distover{\actions}$. As a consequence, the agent traverses to a new next state $s_{h+1}$ sampled from $\transitions(\cdot | s_h, a_h)$ and obtains a reward $\rew(s_h, a_h)$. Without loss of generality, we assume that rewards bounded within $[0, 1]$. The sequence of functions $\pi_h$ used by the agent is referred to as its \emph{policy}, and is denoted $\pi = \{\pi_h\}_{h\in[H]}$. A \emph{trajectory} is the sequence of (state, action) pairs taken by the agent over an episode of length $H$, and is denoted $\tau = \{(s_h, a_h)\}_{h\in[H]}$. We will use the notation $\tau \sim \pi(\mu_0)$ to refer to sampling a trajectory by first sampling a starting state $s_0 \sim \mu_0$, and then executing policy $\pi$ from $s_0$. 

The goal of the learner is to maximize the expected cumulative reward $\E_{s_0\sim \mu_0, P}[\sum_{t=0}^{H-1} \rew(s_t, a_t)]$ over episodes of length $H$. We further define the value function as the expected cumulative return of following some policy $\pi$ from some state $s$ as $V^{\pi}(s) = \E_{s_0\sim \mu_0, P}[\sum_{t=0}^{H-1} \rew(s_t, a_t) | \pi, s_0=s]$. Due to the finite horizon of the episodic setting, we will also need to refer to the expected cumulative reward from state $s$ under policy $\pi$ from time $h \in [H]$. We denote this time-specific value function by $V_h^{\pi}(s) = \E_{P}[\sum_{t=h}^{H-1}R(s_t,a_t) | \pi, s_h = s]$. Finally, the key object of interest is a max-following policy. Given access to a set of $k$ arbitrarily defined policies $\Pi^{k} = \{\pi^k\}_{k=1}^K$ and their respective value functions which we denote by the shorthand $V^{\pi_k}=V^{k}$, a max-following policy is defined as a policy that at every step follows the action of the policy with the highest value in that state. 
\begin{definition}[Max-following policy class]\label{def:max-following}
    Fix a set of policies $\Pi^k$ for a common MDP $\cM$ and an episode length $H$. The class of \emph{max-following policies} $\Pi^k_{\max}$ is defined 
    \[ \Pi^k_{\max} = \{\pi : \forall h \in [H], \forall s \in \states, \pi_h(s) = \pi^{k^*}(s) \text{ for some } k^* \in \argmax_{k \in [K]} V^k(s) \}\]
\end{definition}
Note that for any collection of constituent policies $\Pi^k$ there may be many max-following policies, due to ties between the value functions. Different max-following policies may have different expected return, and we refer the reader to Observation~\ref{obs:tie-breaking} for an example demonstrating this fact.

We assume access to a value function oracle that allows us to approximate a value function of a policy under a samplable distribution at any specified time $h \in [H]$. This oracle is intended to capture the common assumption that the value function of a policy can be efficiently well-approximated by a function from a fixed parameterized class. In practice, one might imagine implementing this oracle as a neural network minimizing the squared error to a target value function.

\begin{definition}[Oracle for $\pi$ value function estimates]\label{def:oracle}
    We denote by $\cO^{\pi}$ an oracle satisfying the following guarantee for a policy $\pi$. For any $\alpha \in (0,1]$, and any $h \in [H]$, given as input a time $h \in [H]$ and sampling access to any efficiently samplable distribution $\mu$, the oracle outputs $\approxv_h^{\pi} \gets \cO^{\pi}(\alpha, \mu, h)$ such that $\E_{s \sim \mu}[(\hat{V}_h^{\pi}(s)-V_h^{\pi}(s))^2] \leq \alpha$. We use the notation $\cO_{\alpha}^{\pi}= \cO^{\pi}(\alpha, \cdot, \cdot)$ to denote $\cO^{\pi}$ with fixed accuracy parameter $\alpha$. We will also use the shorthand $\cO^k = \cO^{\pi^k}$.
\end{definition}

Looking ahead to Section~\ref{sec:learning-arg}, we note that for every distribution $\mu$ on which Algorithm~\ref{alg:maxiteration} queries an oracle, $\mu$ is not only efficiently samplable, but samplable by executing an explicitly constructed policy $\pi_{\mathsf{samp}}$ for $h$ steps in MDP $\cM$, starting from $\mu_0$. Thus, for any distribution $\mu$, policy $\pi^k$, and time $h$ for which we query $\cO^k$, we could efficiently obtain an unbiased estimate of $\E_{s \sim \mu}[V^k_h(s)]$ by following a known $\pi_{\mathsf{samp}}$ for $h$ steps from $\mu_0$, and then switching to $\pi^k$ for the remainder of the episode. We mention this to highlight that our oracle is not eliding any technical obstacles to sampling in the episodic setting. It is
simply abstracting the supervised learning task of converting unbiased estimates of $\E_{s \sim \mu}[V^k_h(s)]$ into an approximation $\approxv^k_h$ with small squared error with respect to $\mu$.

Lastly, we define our benchmark class of policies. Given a set of constituent policies $\Pi^k$, our benchmark defines for each state and time a set of permissible actions: any action taken by a policy $\pi^t \in \Pi^k$ for which the value $V_h^t(s)$ is sufficiently close to the maximum value $\max_{k\in[K]}V_h^k(s)$. The class of approximate max-following policies is then any policy that exclusively takes permissible actions. We refer the reader to Section~\ref{sec:benchmark} for further explanation of this benchmark.

\begin{definition}[Approximate max-following  policies]\label{def:tie-breaking}
We define a set of $\beta$-good policies at state $s\in \states$ and time $h \in [H]$, selected from a set $\Pi^k$, as follows.
\[T_{\beta,h}(s) = \{ \pi \in \Pi^k : V_h^{\pi}(s) \geq \max_{k \in [K]}V_h^{k}(s) - \beta  \}.\] 
Then we define the set of approximate max-following policies for $\Pi^k$ to be
    \[\pitie= \{\pi : \forall h \in [H], \forall s\in \states, \pi_h(s) = \pi_h^t(s) \text{ for some } \pi^t \in T_{\beta,h}(s) \} .\] 
\end{definition}

\section{The $\maxiteration$ learning algorithm} \label{sec:learning-arg}

In this section, we introduce our algorithm for learning an approximate max-following policy, $\maxiteration$ (Algorithm~\ref{alg:maxiteration}. This algorithm learns a good approximation of a max-following policy at step $h$, assuming access to a good approximation of a max-following policy for all previous steps. 

For the first step ($h = 0$), the algorithm learns a good approximation $\approxv^k_0$ for all constituent policies $\pi^k$ on the starting distribution $\mu_0$. These approximate value functions can in turn be used to define the first action taken by the approximate max-following policy, namely $\hat{\pi}_0(s) = \pi_{\argmax_k \approxv^k_0 (s)}(s)$. Following $\hat{\pi}_0(s)$ from $\mu_0$ generates a samplable distribution over states $\mu_1(s) = \E_{s_0 \sim \mu_0} [P( s | s_0, \hat{\pi}_{0}(s_0))]$, and so our oracle assumption allows us to obtain good estimates $\approxv^k_{1}$ with respect to $\mu_1$ for all $\pi^k$. We can then define the second action of the approximate max-following policy, and so on, for all $H$ steps. 

\begin{algorithm}
\caption{$\maxiteration^{\cM}_{\alpha}(\Pi^k)$}
\label{alg:maxiteration}
\begin{algorithmic}[1]
\For{$h \in [H]$}
\For{$k \in [K]$}
\State let $\mu_h$ be the distribution sampled by executing the following procedure:
\Indent
\State sample a starting state $s_0 \sim \mu_0$
\For{$i \in [h]$} 
\State $s_{i+1} \sim P(\;\cdot \mid s_i, \pi^{\argmax_k \approxv^{k}_{i}(s_i)}(s_{i}))$ 
\EndFor 
\State output $s_h$
\EndIndent
\State $\approxv^{k}_h \gets \cO_{\alpha}^{k}(\mu_h, h)$
\EndFor 
\EndFor
\State return policy $\approxpi = \{\hat{\pi}_{h}\}_{h\in[H]}$ where $\hat{\pi}_{h}(s) = \pi^{\argmax_{k\in [K]} \approxv^{k}_{h}(s)}(s)$ 
\end{algorithmic}
\end{algorithm}

\begin{theorem}\label{thm:maxiteration-strong}
    For any $\varepsilon \in (0,1]$, any MDP $\cM$ with starting state distribution $\mu_0$, any episode length $H$, and any $K$ policies $\Pi^k$ defined on $\cM$, let $\alpha \in \Theta(\tfrac{\varepsilon^3}{KH^4})$ and $\beta \in \Theta(\tfrac{\varepsilon}{H})$. Then $\maxiteration^{\cM}_{\alpha}(\Pi^k)$ makes $O(HK)$ oracle queries and outputs $\approxpi$ such that 
    \[\E_{s_0 \sim \mu_0}\left[V^{\hat{\pi}}(s_0)\right]  \geq \min_{\pi \in \pitie} \E_{s_0 \sim \mu_0}\left[V^{\pi}(s_0) \right] - O(\varepsilon).\]  
\end{theorem}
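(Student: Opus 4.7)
The plan is to compare $\approxpi$ against a carefully chosen ``corrected'' policy $\tilde{\pi}$ that is defined to agree with $\approxpi$ at every pair $(h, s)$ for which $\hat{\pi}_h(s)$ happens to coincide with the action of some $\pi^t \in T_{\beta,h}(s)$, and to take the action of some arbitrary fixed element of $T_{\beta,h}(s)$ otherwise. By construction $\tilde{\pi} \in \pitie$, so it suffices to show $\E_{s_0 \sim \mu_0}[V^{\hat{\pi}}(s_0)] \geq \E_{s_0 \sim \mu_0}[V^{\tilde{\pi}}(s_0)] - O(\varepsilon)$; this immediately implies the desired lower bound against $\min_{\pi \in \pitie} \E[V^{\pi}(s_0)]$.

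The first step will be to bound, for each $h$, the $\mu_h$-probability that $\hat{\pi}_h(s) \notin T_{\beta,h}(s)$, i.e.\ the probability of a ``bad'' action. The crucial observation is that the distribution $\mu_h$ defined inside Algorithm~\ref{alg:maxiteration} is \emph{exactly} the state-visitation distribution at step $h$ when executing $\approxpi$ from $\mu_0$. Applying Markov's inequality to the oracle guarantee $\E_{s \sim \mu_h}[(\hat{V}^k_h(s) - V^k_h(s))^2] \leq \alpha$ gives $\Pr_{s \sim \mu_h}[|\hat{V}^k_h(s) - V^k_h(s)| > \beta/2] \leq 4\alpha/\beta^2$, and a union bound over $k \in [K]$ shows that with $\mu_h$-probability at least $1 - 4K\alpha/\beta^2$ all $K$ approximations are within $\beta/2$ of the truth. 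A short inequality chase then establishes that on this event the $\argmax$ of approximations picks a policy whose true value is within $\beta$ of the true maximum, i.e.\ $\hat{\pi}_h(s) \in T_{\beta,h}(s)$, so $\tilde{\pi}_h$ and $\hat{\pi}_h$ agree at $s$.

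The second step will be a coupling argument: run $\hat{\pi}$ and $\tilde{\pi}$ with shared initial state and transition randomness, and let $\tau$ be the first step at which they take a different action (or $H$ if they never diverge). Before $\tau$ the trajectories are identical and contribute zero to the value gap. Because $\mu_h$ is exactly $\hat{\pi}$'s visitation distribution at step $h$, the marginal probability of a bad action at step $h$ along a $\hat{\pi}$-trajectory is at most $4K\alpha/\beta^2$, so $\Pr[\tau < H] \leq 4HK\alpha/\beta^2$ by union bound. Bounded rewards give a post-divergence return gap of at most $H$, yielding
\[ \bigl|\E_{s_0 \sim \mu_0}[V^{\hat{\pi}}(s_0)] - \E_{s_0 \sim \mu_0}[V^{\tilde{\pi}}(s_0)]\bigr| \leq H \cdot \tfrac{4HK\alpha}{\beta^2} = \tfrac{4H^2 K\alpha}{\beta^2}, \]
which collapses to $O(\varepsilon)$ for the stated $\alpha = \Theta(\varepsilon^3/(KH^4))$ and $\beta = \Theta(\varepsilon/H)$.

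The main obstacle is conceptual rather than technical: one has to ensure the oracle's squared-error guarantee is invoked against \emph{exactly the distribution that $\approxpi$ actually visits} at step $h$, because otherwise a concentrability-style overlap assumption between the query distribution and the true visitation would be required. The algorithm's incremental construction, in which $\hat{\pi}_h$ is defined only after $\hat{\pi}_0, \ldots, \hat{\pi}_{h-1}$ have been fixed, is precisely what makes $\mu_h$ coincide with $\hat{\pi}$'s step-$h$ visitation. The price paid for having only a squared-error oracle, rather than a high-probability one, is the use of Markov's inequality, which is what forces $\alpha$ to scale as $\varepsilon^3$ rather than linearly in $\varepsilon$.
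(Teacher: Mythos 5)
Your proposal is correct and follows essentially the same route as the paper's proof: both construct a specific policy in $\pitie$ that agrees with $\approxpi$ outside a low-probability bad set, exploit the fact that $\mu_h$ is exactly $\approxpi$'s step-$h$ visitation distribution, apply Markov's inequality and union bounds over $k \in [K]$ and $h \in [H]$, and charge $H$ per bad trajectory. The only cosmetic differences are that you define the bad event via $\hat{\pi}_h(s) \notin T_{\beta,h}(s)$ rather than via large value-approximation error (the former follows from the latter by the same inequality chase the paper uses) and you phrase the comparison as a coupling rather than as equality of trajectory probabilities outside $B_\tau$; the quantitative bookkeeping is identical.
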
 
\begin{proof}
For all $h \in [H]$, $k \in [K]$, let $\approxv_h^k$ denote the approximate value function obtained from $\cO_{\alpha}^k(\mu_h, h)$ in Algorithm~\ref{alg:maxiteration}. 
We then define, for every $h\in [H]$, the set of states for which some approximate value function $\approxv^k_h(s)$ has large absolute error ($B_h$) and the set of bad trajectories $(B_\tau)$ that pass through a state in $B_h$ for any $h \in [H]$ :
${B_h = \{s \in S: \exists k \in [K] \text{ s.t. } |\approxv_h^k(s) - V_h^k(s)| \geq \tfrac{\varepsilon}{2H}\}}$ and $B_{\tau}= \{ \{(s_h, a_h)\}_{h \in [H]}: \exists h \in [H] \text{ s.t. } s_h \in B_h\}$.
We will show that there exists an approximate max-following policy $\pi \in \pitie$ such that for any trajectory $\tau' \not \in B_{\tau}$, $\Pr_{\tau \sim \approxpi(\mu_0)}[\tau = \tau'] = \Pr_{\tau \sim \pi(\mu_0)}[\tau = \tau']$. 
We then bound the probability $\Pr_{\tau \sim \approxpi(\mu_0)}[\tau \in B_{\tau}]$, and the contribution to $\E_{s_0 \sim \mu_0}\left[V^\pi(s_0)\right] $ from these trajectories, proving the claim.

Let $V_h^{k^*}(s)$ denote the value of the policy that $\approxpi$ follows at time $h$ and state $s$. 
From the definition of the bad set $B_h$
and the setting of $\beta \in \Theta(\tfrac{\varepsilon}{H})$, for any state $s \not \in B_h$, 
$$V^{k^*}_h(s) \geq \approxv^{k^*}_h(s) - \tfrac{\varepsilon}{2H} \geq \max_{k\in[K]} \approxv^{k}_h(s)  - \tfrac{\varepsilon}{2H} \geq \max_{k\in[K]}V^k_h(s) - \beta.$$
In other words, if a state $s$ is not bad at time $h$, then $\approxpi_h(s) = \pi_h^k(s)$ for a policy $\pi^k$ that has value $V_h^k(s)$ within $\beta$ of the true max value $\max_{k\in[K]}V_h^k(s)$. 
It then follows from the definition of the class of approximate max-following policies $\pitie$ (Definition~\ref{def:tie-breaking}) that there exists some $\pi \in \pitie$ such that for all $h \in [H]$, for all $s \not \in B_h$, $\approxpi_h(s) = \pi_{h}(s)$. 

For any trajectory $\tau'$, 
$\Pr_{\tau \sim \approxpi(\mu_0)}[\tau = \tau'] = \Pr_{\mu_0}[s_0]\cdot\prod_{h=0}^{H-1} P(s_{h+1} | s_{h}, \approxpi_{h}(s_h))$.
Then for any trajectory $\tau' \not \in B_{\tau}$, $\Pr_{\tau \sim \approxpi(\mu_0)}[\tau = \tau'] = \Pr_{\tau \sim \pi(\mu_0)}[\tau = \tau']$, and therefore 
\[\E_{\tau \sim \approxpi(\mu_0)}\left[\sum_{h=0}^{H-1}R(s_h, a_h) \mid \tau \not \in B_{\tau}\right] = \E_{\tau \sim \pi(\mu_0)}\left[\sum_{h=0}^{H-1}R(s_h, a_h) \mid \tau \not \in B_{\tau}\right]\]
For $\tau \in B_{\tau}$, we have lower and upper-bounds $\E_{\tau \sim \approxpi(\mu_0)}[\sum_{h=0}^{H-1}R(s_h, a_h) \mid \tau \in B_{\tau}] \geq 0$ and $\E_{\tau \sim \pi(\mu_0)}[\sum_{h=0}^{H-1}R(s_h, a_h) \mid \tau \in B_{\tau}] \leq H$. We can then write:
\allowdisplaybreaks
{\footnotesize\begin{align*}
\E_{s_0 \sim \mu_0}\left[V^{\hat{\pi}}(s_0)\right] 
& = \E_{\tau \sim \approxpi(\mu_0)}\left[ \sum_{h=0}^{H-1}R(s_h, a_h) \mid \tau \not \in B_{\tau} \right]\cdot \Pr_{\tau \sim \approxpi(\mu_0)}[\tau \not \in B_{\tau}]  \\
& \quad \quad \quad \quad + \E_{\tau \sim \approxpi(\mu_0)}\left[ \sum_{h=0}^{H-1}R(s_h, a_h) \mid \tau  \in B_{\tau} \right]\cdot \Pr_{\tau \sim \approxpi(\mu_0)}[\tau \in B_{\tau}] \\
& \geq \E_{\tau \sim \approxpi(\mu_0)}\left[ \sum_{h=0}^{H-1}R(s_h, a_h) \mid \tau \not \in B_{\tau} \right]\cdot \Pr_{\tau \sim \approxpi(\mu_0)}[\tau \not \in B_{\tau}] \\
& = \E_{\tau \sim \pi(\mu_0)}\left[ \sum_{h=0}^{H-1}R(s_h, a_h) \mid \tau \not \in B_{\tau} \right]\cdot \Pr_{\tau \sim \pi(\mu_0)}[\tau \not \in B_{\tau}] \\
& \geq \E_{\tau \sim \pi(\mu_0)}\left[ \sum_{h=0}^{H-1}R(s_h, a_h)  \right] - H\cdot \Pr_{\tau \sim \pi(\mu_0)}[\tau \in B_{\tau}] \\
&\geq \min_{\pi \in \pitie} \E_{s_0 \sim \mu_0}[V^{\pi}(s_0)]  - H\cdot \Pr_{\tau \sim \pi(\mu_0)}[\tau \in B_{\tau}].
\end{align*}\par}
It remains to upper-bound $\Pr_{\tau \sim \pi(\mu_0)}[\tau \in B_{\tau}]$. 
We have already argued $\Pr_{\tau \sim \pi(\mu_0)}[\tau \in B_{\tau}] = \Pr_{\tau \sim \approxpi(\mu_0)}[\tau \in B_{\tau}]$. 
Observing that 
$\Pr_{\tau \sim \approxpi(\mu_0)}[\tau \in B_{\tau}] \leq \sum_{h=0}^{H-1} \Pr_{\tau \sim \approxpi(\mu_0)}[s_h \in B_h]$, it is sufficient to show $\Pr_{\tau \sim \approxpi(\mu_0)}[s_h \in B_h] \in O(\tfrac{\varepsilon}{H^2})$ to prove the claim.
For all $h\in[H]$, let $\mu_h(s) = \Pr_{\tau \sim \approxpi(\mu_0)}[s_h = s]$, and note that this is the distribution supplied to the oracle at iteration $h$ of Algorithm~\ref{alg:maxiteration}. It follows from our oracle assumption (Definition~\ref{def:oracle}) that for all $k \in [K]$, $\E_{s_h \sim \mu_h}[(\approxv^k(s_h) - V^k(s_h))^2] < \alpha$. We apply Markov's inequality to conclude that for all $k \in [K]$, 
\[\Pr_{s_h \sim \mu_h}[|\approxv_h^k(s_h) - V_h^k(s_h)| \geq \tfrac{\varepsilon}{2H}] < \tfrac{4\alpha H^2}{\varepsilon^2} \in O(\tfrac{\varepsilon}{KH^2}).\]
Union bounding over the $K$ constituent policies gives ${\Pr_{s_h \sim \mu_h} [s_h \in B_h] \in O(\tfrac{\varepsilon}{H^2})}$, from the definition of $B_h$. Union bounding over the trajectory length $H$, we then have
$\Pr_{\tau \sim \approxpi(\mu_0)}[\tau \in B_{\tau}] \in O(\tfrac{\varepsilon}{H}).$
It follows that
\[\E_{s_0 \sim \mu_0}\left[V^{\hat{\pi}}(s_0)\right] \geq \min_{\pi \in \pitie} \E_{s_0 \sim \mu_0}[V^{\pi}(s_0)]  - O(\varepsilon), \]
completing the proof.
\end{proof}

\section{The approximate max-following benchmark}
\label{sec:benchmark}

In this section, we provide additional context for our benchmark class of approximate max-following policies. We show that the worst policy in our benchmark class competes with the best fixed policy from the set of constituent policies. We also provide examples of MDPs that showcase properties of the set of (approximate) max-following policies.

\begin{restatable}[Worst approximate max-following policy competes with best fixed policy]{lemma}{approxmaxvsfixed}
\label{lem:approx-max-vs-fixed}
    For any $\varepsilon \in (0,1]$ and any episode length $H$, let $\beta \in \Theta(\tfrac{\varepsilon}{H})$. Then for any MDP $\cM$ with starting state distribution $\mu_0$, and any $K$ policies $\Pi^k$ defined on $\cM$, 
     \[\min_{\pi \in \pitie} \E_{s_0 \sim \mu_0}\left[V^{\hat{\pi}}(s_0)\right]  \geq \max_{k\in[K]} \E_{s_0 \sim \mu_0}\left[V^{k}(s_0) \right] - O(\varepsilon).\]
\end{restatable}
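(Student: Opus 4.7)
The plan is to prove by backward induction on $h \in [H]$ the stronger pointwise statement that for every $\pi \in \pitie$ and every $s \in \states$,
\[ V_h^{\pi}(s) \geq \max_{k \in [K]} V_h^k(s) - (H-h)\beta. \]
Specializing at $h = 0$ to any fixed constituent $k^* \in [K]$, taking expectation over $s_0 \sim \mu_0$, and then maximizing over $k^*$ yields $\E_{s_0 \sim \mu_0}[V^{\pi}(s_0)] \geq \max_{k \in [K]} \E_{s_0 \sim \mu_0}[V^{k}(s_0)] - H\beta$, which is $-O(\varepsilon)$ for $\beta = \Theta(\varepsilon/H)$. Since $\pi \in \pitie$ was arbitrary, this gives the lemma. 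The base case $h = H-1$ is immediate: the value $V_{H-1}^{\pi}(s) = R(s, \pi_{H-1}(s))$ is a one-step reward, and since $\pi_{H-1}(s) = \pi^t_{H-1}(s)$ for some $\pi^t \in T_{\beta, H-1}(s)$ by Definition~\ref{def:tie-breaking}, we have $V_{H-1}^{\pi}(s) = V_{H-1}^t(s) \geq \max_k V_{H-1}^k(s) - \beta$.

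The inductive step relies on a short Bellman identity. Fix $s$ and let $t$ be any constituent such that $\pi_h(s) = \pi^t_h(s)$ and $V_h^t(s) \geq \max_k V_h^k(s) - \beta$, which exists by membership in $\pitie$. Writing the one-step Bellman expansion for both $V_h^{\pi}(s)$ and $V_h^t(s)$, which share the same first action and therefore the same immediate reward and next-state distribution, and subtracting gives
\[ V_h^{\pi}(s) = V_h^t(s) + \E_{s' \sim P(\cdot \mid s, \pi_h(s))} \left[ V_{h+1}^{\pi}(s') - V_{h+1}^t(s') \right]. \]
The inductive hypothesis bounds $V_{h+1}^{\pi}(s') \geq \max_k V_{h+1}^k(s') - (H-h-1)\beta \geq V_{h+1}^t(s') - (H-h-1)\beta$, since the pointwise maximum dominates the value of any particular constituent. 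Substituting and using $V_h^t(s) \geq \max_k V_h^k(s) - \beta$ then closes the induction with one additional $\beta$ of error accumulated at step $h$.

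The obstacle to be aware of is that a direct performance-difference-lemma comparison against a single fixed constituent $\pi^{k^*}$ does not seem to work. The per-step advantage $Q_h^{k^*}(s_h, \pi_h(s_h)) - V_h^{k^*}(s_h)$ cannot be lower-bounded using only the approximate-max-following guarantee, because the action $\pi$ takes at $(s_h, h)$ is the first action of some state-dependent constituent $\pi^t$, and $\pi^t$'s value guarantee only controls $Q_h^t(s_h, \pi_h(s_h))$, not $Q_h^{k^*}(s_h, \pi_h(s_h))$. The fix is to run the induction against the pointwise running maximum $\max_k V_{h+1}^k(s')$ rather than any fixed $V_{h+1}^{k^*}(s')$: since this maximum dominates $V_{h+1}^t(s')$ for the specific $t$ picked out at $(s, h)$, the cross term in the Bellman identity is controlled for free, and the per-step loss is exactly the $\beta$ slack afforded by Definition~\ref{def:tie-breaking}.
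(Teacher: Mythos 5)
Your proof is correct, and it takes a genuinely different (and arguably cleaner) route than the paper's. You run a \emph{backward} induction on $h$ establishing the pointwise invariant $V_h^{\pi}(s) \geq \max_{k} V_h^k(s) - (H-h)\beta$ for every state $s$ and every $\pi \in \pitie$, closing the induction via the one-step Bellman identity and the fact that the chosen constituent's future value is dominated by the pointwise max. The paper instead runs a \emph{forward} induction over a prefix length $C$, comparing expected returns of hybrid policies that follow $\pi$ for $C$ steps and then commit to the constituent selected at step $C$; this is a performance-difference-style telescoping in which each swap of "commit now" for "commit one step later" costs $\beta$ in expectation over the induced state distribution. Both arguments accumulate $\beta$ per step for a total loss of $H\beta = O(\varepsilon)$, and both hinge on the same $\beta$-goodness guarantee from Definition~\ref{def:tie-breaking}. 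What your version buys is a strictly stronger intermediate claim — a statewise lower bound uniform over $\states$, rather than a bound in expectation over $\mu_0$ — obtained by standard dynamic programming; what the paper's version buys is a template that generalizes more directly to settings where only distributional (rather than pointwise) control is available, which is the regime its main theorem actually operates in. Your closing remark about why a naive performance-difference comparison against a single fixed $\pi^{k^*}$ fails (the advantage is only controlled with respect to the state-dependent constituent $\pi^t$, not $\pi^{k^*}$) is exactly right, and inducting against the running pointwise maximum is the correct fix.
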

We defer the proof of Lemma~\ref{lem:approx-max-vs-fixed} to Appendix~\ref{app:proofs}.

It is an immediate corollary of Theorem~\ref{thm:maxiteration-strong} and Lemma~\ref{lem:approx-max-vs-fixed} that the policy learned by Algorithm~\ref{alg:maxiteration} competes with the best constituent policy.

\begin{corollary}
    For any $\varepsilon \in (0,1]$, any MDP $\cM$ with starting state distribution $\mu_0$, any episode length $H$, and any $K$ policies $\Pi^k$ defined on $\cM$, let $\alpha \in \Theta(\tfrac{\varepsilon^3}{KH^4})$, and let $\approxpi$ denote the policy output by $\maxiteration^{\cM}_{\alpha}(\Pi^k)$.
    Then 
    \[\E_{s_0 \sim \mu_0}\left[V^{\hat{\pi}}(s_0)\right]  \geq \max_{k\in[K]} \E_{s_0 \sim \mu_0}\left[V^{k}(s_0) \right] - O(\varepsilon).\]
\end{corollary}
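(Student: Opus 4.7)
The plan is to chain Theorem~\ref{thm:maxiteration-strong} and Lemma~\ref{lem:approx-max-vs-fixed}, which is essentially routine once the parameters are matched. First I would invoke Theorem~\ref{thm:maxiteration-strong} with the given $\alpha \in \Theta(\varepsilon^3/(KH^4))$ and with the parameter $\beta \in \Theta(\varepsilon/H)$ that the theorem implicitly fixes for the benchmark class $\pitie$. This yields
\[\E_{s_0 \sim \mu_0}\left[V^{\hat{\pi}}(s_0)\right] \geq \min_{\pi \in \pitie} \E_{s_0 \sim \mu_0}\left[V^{\pi}(s_0)\right] - O(\varepsilon),\]
reducing the corollary to showing that even the worst approximate max-following policy is competitive with the best constituent policy.

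Next I would apply Lemma~\ref{lem:approx-max-vs-fixed} with the same $\varepsilon$ and the same $\beta \in \Theta(\varepsilon/H)$. Since both results refer to the class $\pitie$ for the same value of $\beta$, the two lower bounds refer to exactly the same benchmark, and the lemma supplies
\[\min_{\pi \in \pitie} \E_{s_0 \sim \mu_0}\left[V^{\pi}(s_0)\right] \geq \max_{k\in[K]} \E_{s_0 \sim \mu_0}\left[V^{k}(s_0)\right] - O(\varepsilon).\]
Chaining this with the previous inequality and absorbing the two $O(\varepsilon)$ losses into a single $O(\varepsilon)$ term yields the desired bound.

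Because the proof is merely the composition of two results proved elsewhere in the paper, I do not anticipate any serious obstacle. The only point of care is to verify that the choice of $\beta$ required by Theorem~\ref{thm:maxiteration-strong} to define $\pitie$ is compatible with the $\beta$ used by Lemma~\ref{lem:approx-max-vs-fixed}; both are $\Theta(\varepsilon/H)$, so the benchmark class is the same in both statements and no additional parameter tuning is needed beyond the hypothesis of the corollary.
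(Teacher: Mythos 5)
Your proposal is correct and matches the paper exactly: the paper presents this corollary as an immediate consequence of chaining Theorem~\ref{thm:maxiteration-strong} with Lemma~\ref{lem:approx-max-vs-fixed}, both instantiated with the same $\beta \in \Theta(\tfrac{\varepsilon}{H})$, just as you describe. Your additional care in checking that the two results refer to the same benchmark class $\pitie$ is the only nontrivial point, and you handle it correctly.
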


We provide diagrams of MDPs as examples for the observations that we make below. States in $\mathcal{S}$ are denoted by the labels on the nodes. Actions in $\mathcal{A}$ are indicated by arrows from given states with deterministic transition dynamics and the rewards $R(s,a)$ are labeled over the corresponding arrows. Arrows may be omitted for transitions that are self-loops with reward $0$.
\begin{observation}
The worst approximate max-following policy can be arbitrarily better than the best constituent policy.
\end{observation}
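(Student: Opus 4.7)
The approach I would take is to exhibit an explicit MDP family, parameterized by horizon $H$, together with a pair of constituent policies for which the unique max-following policy outperforms the best single constituent by $\Omega(H)$. The observation then follows because the worst approximate max-following policy coincides with the max-following policy whenever the value gap between constituents strictly exceeds $\beta$ at every reachable state.

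The construction I have in mind is a ``two parallel chains'' MDP. The state space is the disjoint union of two deterministic chains with states $s_0^A, s_1^A, \ldots, s_{H-1}^A$ and $s_0^B, \ldots, s_{H-1}^B$; the starting distribution $\muz$ is uniform on $\{s_0^A, s_0^B\}$; from each state $s_h^X$ with $X \in \{A,B\}$ both actions transition deterministically to $s_{h+1}^X$, but a ``correct'' action yields reward $1$ and a ``wrong'' action yields reward $0$. I take two constituent policies: $\pi^A$, which plays correctly on every chain-$A$ state and incorrectly on every chain-$B$ state, and $\pi^B$, defined symmetrically. A direct calculation shows that for every reachable state $s_h^X$ one has $V_h^{\pi^X}(s_h^X) = H-h$ while $V_h^{\pi^Y}(s_h^X) = 0$ for $Y \neq X$, so a single constituent strictly attains the max at every reachable state with value gap $H - h \geq 1$.

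From these value functions the rest is bookkeeping. The max-following policy plays the correct action at every reachable state, so $\mathbb{E}_{s_0 \sim \muz}[V^{\pi}(s_0)] = H$ for the unique max-following $\pi$, while $\mathbb{E}_{s_0 \sim \muz}[V^{\pi^X}(s_0)] = \tfrac{1}{2}H$ for each constituent. Because the gap between the non-max constituent and the max exceeds $\beta$ at every reachable state (assuming $\beta < 1$, which is consistent with $\beta \in \Theta(\varepsilon/H)$ for small $\varepsilon$), the set $T_{\beta,h}(s)$ is a singleton at every reachable state, so every policy in $\pitie$ agrees with max-following on reachable states and thus achieves expected return $H$. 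Letting $H$ grow without bound then yields the claim.

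The main obstacle is the construction itself: the worst approximate max-following is free to pick any $\beta$-near-optimal constituent, so if ties or near-ties were present at reachable states the lower bound could degrade all the way down to the best constituent's value. I therefore design the example so that exactly one constituent strictly dominates at each reachable state by a margin far larger than $\beta$. Verifying the bounded-reward convention of the paper is routine since all rewards lie in $\{0,1\}$.
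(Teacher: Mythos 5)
Your proof is correct: the two-parallel-chains construction gives every policy in the approximate max-following class expected return $H$ while the best constituent gets $H/2$, an unbounded additive gap, and you correctly handle the one real pitfall by making the per-state value gap strictly exceed $\beta$ so that $T_{\beta,h}$ is a singleton at every reachable state. The paper's example is different in an instructive way: it uses a single three-state chain in which the ``always right'' and ``always left'' constituents each get stuck at an endpoint and collect at most reward $2$ from \emph{any} start state, while any (approximate) max-following policy bounces between the endpoints and collects reward $H$. The key distinction is the mechanism of the advantage. In your construction the max-following policy never switches constituents within a trajectory; it merely selects the constituent matched to the random starting chain, so the gap comes entirely from averaging over $\mu_0$ and would vanish for a deterministic start. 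In the paper's example the advantage persists from every fixed starting state and arises from genuine within-trajectory interleaving of the two constituents, which is the more compelling illustration of why max-following is a worthwhile benchmark; it also keeps the best constituent's value at a constant, so the ratio (not just the difference) is unbounded. Both constructions establish the observation as stated, since the benchmark comparison is in expectation over $\mu_0$.
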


\begin{figure}
\label{fig:mdp_examples}
  \centering
  \begin{subfigure}[b]{0.4\textwidth}
    \centering
    \begin{tikzpicture}[shorten >=1pt, node distance=2cm, on grid, auto] 

   \node[state] (s_0)   {$s_0$}; 
   \node[state] (s_1) [right=of s_0] {$s_1$}; 
   \node[state] (s_2) [right=of s_1] {$s_2$}; 

    \path[->] 
    (s_0) edge [bend left] node {1} (s_1)
    (s_1) edge [bend left] node {1} (s_0)
          edge [bend left] node {1} (s_2)
    (s_2) edge [bend left] node {1} (s_1);
\end{tikzpicture}
\subcaption{MDP in which two policies going either only left or right obtain low return but max-following them would be optimal.}\label{fig:max_arbitrary_mdp}
\end{subfigure}
   \hfill
   \begin{subfigure}[b]{0.5\textwidth}
   \centering
   \begin{tikzpicture}[shorten >=1pt, node distance=2cm, on grid, auto,scale=0.5]
    \node[state] (s_0)   {$s_0$};
    \node[state] (s_1) [right=of s_0] {$s_1$};
    \node[state] (s_2) [right=of s_1] {$s_2$};
    \node[state] (s_3) [right=of s_2] {$s_3$};
    \node[state] (s_4) [above=of s_1] {$s_4$};

    \path[->]
    (s_0) edge [bend left] node {0} (s_1)
    (s_1) edge node {0} (s_4)
          edge [bend left] node {0} (s_0)
    (s_2) edge [loop above] node {0} ()
          edge [bend left] node {0} (s_1)
          edge [bend left] node {$\varepsilon$} (s_3)
    (s_3) 
    (s_4) edge [loop right] node {1} ();

\end{tikzpicture}
   \subcaption{MDP with $\mathcal{A} =\{\mathsf{right, left, up}$\} where starting from $s_2$, max-following is far worse than optimal and starting from $s_0$, different max-following policies have different values (depending on tie-breaking).} 
   \label{fig:max_opt_mdp}
   \end{subfigure}
   \caption{Examples of MDPs with max-following policy performance comparison}
\end{figure}
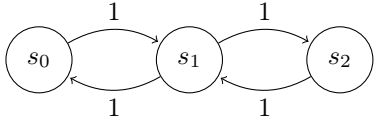
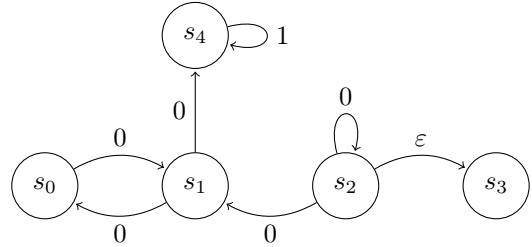

Consider in Figure~\ref{fig:max_arbitrary_mdp} two policies on this MDP: $\pi^0(s) = \mathsf{right}$ and $\pi^1(s) = \mathsf{left}$, for all $s \in \states$.  Note that for any episode length $H \geq 2$, for all $k \in \{0,1\}$, $\max_{s \in \states}V^k(s) = 2$. For any $\beta < 1$, $\pitie$ comprises policies $\pi$ such that $\pi(s_0) = \mathsf{right}$, $\pi(s_2) = \mathsf{left}$, and $\pi(s_1) \in \{\mathsf{right}, \mathsf{left}\}$. Therefore for any episode length $H$, and state $s \in \states$, $\min_{\pi \in \pitie}V^{\pi}(s) = H$. 
 In this example, any approximate max-following policy is also an optimal policy, whose gap in expected return with the best constituent policy can be made arbitrarily large by increasing $H$. 

\begin{observation}
A max-following policy cannot always compete with an optimal policy.
\end{observation}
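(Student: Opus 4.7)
The plan is to exhibit a concrete MDP together with a pair of constituent policies such that every max-following policy suffers an arbitrarily large additive loss against the optimal policy. The MDP of Figure~\ref{fig:max_opt_mdp} is designed for exactly this purpose, so I would use it with starting distribution $\muz$ concentrated on $s_2$ and horizon $H \geq 3$.

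First I would specify the two constituent policies $\pi^0, \pi^1$. Let $\pi^0$ take the self-loop at $s_2$, go right from $s_0$ to $s_1$, and go up from $s_1$ to $s_4$. Let $\pi^1$ go right from $s_2$ to $s_3$ and, at any other state, take any action that does not reach $s_4$ (for instance, cycle between $s_0$ and $s_1$). A direct calculation then gives $V^0(s_2) = 0$ and $V^1(s_2) = \varepsilon$, so the argmax at $s_2$ is uniquely $\pi^1$ and every $\pi \in \Pi^k_{\max}$ must take the right action at $s_2$.

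Next I would trace the resulting trajectories starting from $s_2$. Under any max-following policy the agent moves to $s_3$ in one step and then self-loops, collecting total episode reward exactly $\varepsilon$. The optimal policy instead moves left to $s_1$, up to $s_4$, and loops at $s_4$ for the remaining $H-2$ steps, yielding total return $H - 2$. Taking $\varepsilon$ arbitrarily small and $H$ arbitrarily large then shows the additive gap between the (unique) max-following policy and the optimal policy can be made arbitrarily large, which is the content of the observation.

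I do not anticipate any substantive technical obstacle. The one point needing care is to ensure that the argmax at $s_2$ is \emph{strictly} unique, so that the conclusion applies to every member of $\Pi^k_{\max}$ rather than depending on a lucky tie-breaking rule; this is why $\pi^0$ is chosen to stay put at $s_2$ and collect no reward, making its value there strictly below $\pi^1$'s. Everything else is just bookkeeping about the deterministic dynamics of the MDP.
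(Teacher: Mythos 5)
Your proposal is correct and follows essentially the same route as the paper: the same MDP of Figure~\ref{fig:max_opt_mdp}, starting distribution concentrated on $s_2$, the observation that the unique value-maximizing constituent at $s_2$ is the one that goes right into the absorbing state $s_3$ for reward $\varepsilon$, and the comparison with the optimal return of $H-2$ via $s_4$. The only (immaterial) difference is your choice of constituent policies --- the paper uses the three constant-action policies $\mathsf{right}$, $\mathsf{left}$, $\mathsf{up}$, which already make $\pi^0$ the strictly unique argmax at $s_2$, so your extra care about tie-breaking is handled there as well.
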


In Figure~\ref{fig:max_opt_mdp}, consider policies $\pi^0(s) = \mathsf{right}$, $\pi^1(s) = \mathsf{left}$, and $\pi^2(s) = \mathsf{up}$, for all $s \in \states$. At state $s_2$, $\pi^0$ is the only policy with non-zero value. Thus, any max-following policy will take action $\mathsf{right}$ from $s_2$, receiving reward $\varepsilon$ and then reward 0 for the remainder of the episode. Given a starting state distribution supported entirely on $s_2$, for any episode length $H \geq 3$, the optimal policy will obtain cumulative reward $H-2$, whereas any max-following policy will only obtain reward $\varepsilon$.

\begin{observation}\label{obs:tie-breaking}
Different max-following policies may have different expected cumulative reward.
\end{observation}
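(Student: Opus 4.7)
The plan is to establish this observation by exhibiting a single MDP together with a constituent policy collection for which two valid max-following policies, differing only in how ties in $\argmax_{k\in[K]} V_h^k(s)$ are broken, achieve different expected cumulative rewards. I would reuse the MDP in Figure~\ref{fig:max_opt_mdp} together with the three constituent policies $\pi^0 \equiv \mathsf{right}$, $\pi^1 \equiv \mathsf{left}$, $\pi^2 \equiv \mathsf{up}$, following the convention (implicit in the figure) that every unlabeled transition is a zero-reward self-loop.

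First I would compute the relevant constituent value functions at the starting state, showing that $V_0^0(s_0) = V_0^1(s_0) = V_0^2(s_0) = 0$ for any horizon $H$: the policy $\pi^1$ self-loops at $s_0$, $\pi^2$ self-loops at $s_0$, and $\pi^0$ proceeds $s_0 \to s_1$ and then self-loops at $s_1$ under the $\mathsf{right}$ action, never reaching the reward-$1$ loop at $s_4$. So $\argmax_k V_0^k(s_0)$ is a three-way tie, and Definition~\ref{def:max-following} permits an arbitrary choice among the three. The same analysis applies at $s_1$, where $V_h^0(s_1)=V_h^1(s_1)=V_h^2(s_1)=0$, so max-following at $s_1$ also permits any constituent action.

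Next I would construct two specific max-following policies that differ only at the tied states. Policy (a) breaks the tie at $s_0$ toward $\pi^0$, then breaks the resulting tie at $s_1$ toward $\pi^2$; the trajectory is $s_0 \to s_1 \to s_4$ and thereafter remains at $s_4$ (since $\pi^0$ strictly dominates at $s_4$ under the $\mathsf{right}$ self-loop with reward $1$), yielding cumulative reward $H - 2$. Policy (b) breaks the tie at $s_0$ toward $\pi^1$; the trajectory self-loops at $s_0$ for all $H$ steps, yielding cumulative reward $0$. Since both (a) and (b) satisfy the defining condition in Definition~\ref{def:max-following} at every state and time, both are genuine max-following policies, and for $H \geq 3$ their expected returns $H-2$ and $0$ differ.

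I do not expect any real obstacle: the observation is purely existential, the dynamics and rewards in the example are deterministic, and the only care needed is in verifying that each constituent policy truly has value $0$ at both $s_0$ and $s_1$ so that the max-following definition admits the tie-breakings used in (a) and (b). A minor point worth flagging in the write-up is that (a) already suffices to show Lemma~\ref{lem:approx-max-vs-fixed} is not vacuous in this MDP, since the worst max-following policy (b) matches the best constituent value of $0$, while the best max-following policy (a) strictly exceeds it.
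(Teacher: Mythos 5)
Your proposal is correct and matches the paper's own argument essentially verbatim: both use the MDP of Figure~\ref{fig:max_opt_mdp} started at $s_0$, note the three-way tie $V^0(s_0)=V^1(s_0)=V^2(s_0)=0$, and compare the tie-breaking that goes $\mathsf{right}$ (reaching $s_4$ for return $H-2$) against one that stays at $s_0$ (return $0$). The only cosmetic difference is that you treat the choice at $s_1$ as a second tie-break while the paper treats $\mathsf{up}$ as forced there (reading the reward-$1$ self-loop at $s_4$ as $\pi^2$'s action); either reading of the figure supports the observation.
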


We again consider Figure~\ref{fig:max_opt_mdp}, but suppose now the starting state distribution is supported entirely on $s_0$. For all $k \in [3]$, $V^k(s_0) = 0$ and so a max-following policy may take any action from $s_0$. A max-following policy that always takes actions $\mathsf{left}$ or $\mathsf{up}$ from $s_0$ will only ever obtain cumulative reward 0, but a max-following policy that takes action  $\mathsf{right}$ will move to $s_1$ and (so long as more than one step remains in the episode) will then take action $\mathsf{up}$ and move to state $s_4$, where it will stay to obtain cumulative reward $H-2$. 

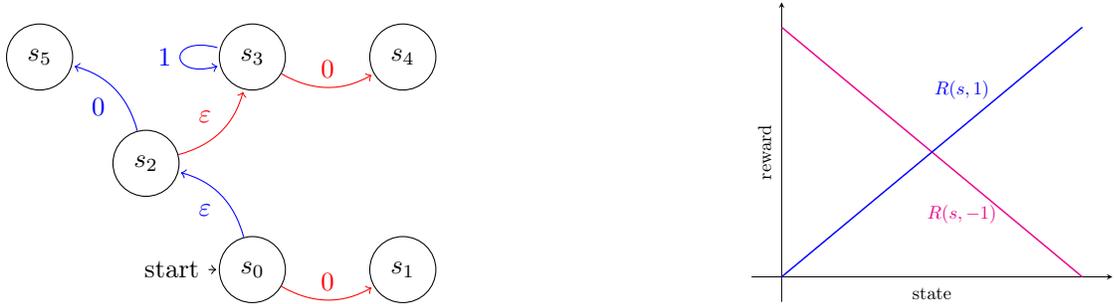
\begin{figure}
     \begin{subfigure}[b]{0.45\linewidth}
     \begin{tikzpicture}[shorten >=1pt, node distance=2cm, on grid, auto, every loop/.style={<-,shorten <=1pt},scale=0.3]
    \node[state] (s2)   {$s_2$};
    \node[state] (s3) [above right=of s2] {$s_3$};
    \node[state] (s4) [right=of s3] {$s_4$};
    \node[state,initial left] (s0) [below right=of s2] {$s_0$};
    \node[state] (s1) [right=of s0] {$s_1$};
    \node[state] (s5) [above left=of s2] {$s_5$};
    \path[->] 
    (s3) edge [loop left,blue] node {$1$} (s3)
          edge [bend right,red] node {$0$} (s4)
    (s2) edge [bend right,blue] node {$0$} (s5)
    edge [bend right,red] node {$\varepsilon$} (s3)
    (s0) edge [bend right,red] node {$0$} (s1)
        edge [bend right,blue] node {$\varepsilon$} (s2);
\end{tikzpicture}
     \caption{MDP where small value approximation errors at $s_0$ hinder max-following.  Arrows representing transition dynamics are color-coded red to indicate actions taken by $\pi^0$ and blue to indicate actions taken by $\pi^1$.}
     \label{fig:apx_max_follow_mdp}
     \end{subfigure}
    \hfill
    \begin{subfigure}[b]{0.4\textwidth}
    \begin{tikzpicture}[scale=0.70]
    \begin{axis}[
        axis lines=middle,
        ylabel near ticks,
        xlabel near ticks,
        xlabel={state},
        ylabel={reward},
        xtick=\empty,
        ytick=\empty,
        enlargelimits,
        clip=false
    ]
        \addplot [domain=0:1, samples=2, thick, magenta] {1-x};
        \addplot [domain=0:1, samples=2, thick, blue] {x};
        \node [color=magenta] at (axis cs:0.6,0.25) {$R(s,-1)$};
        \node [color=blue] at (axis cs:0.6,0.75) {$R(s,1)$};
    \end{axis}
\end{tikzpicture}
    \caption{MDP where the max-following value function is piecewise linear, but constituent policy's values are affine functions of the state for fixed actions.}
    \label{fig:pw_linear_parametric}  
    \end{subfigure}
    \caption{Examples for Observation~\ref{obs:approx-tie-breaking} and Observation~\ref{obs:value-class}}
\end{figure}

If the value functions of constituent policies are exactly known, it is easy to construct a max-following policy, but the learner may not have access to these functions. If the learner only has access to approximations and follows whichever policy has the larger approximate value at the current state, the resulting policy can have much lower expected cumulative reward than the max-following policy. This is true even for state-wise bounds on the value approximation error. This observation previously motivated our definition of the approximate max-following class (Definition~\ref{def:tie-breaking}). 

\begin{observation}\label{obs:approx-tie-breaking}
    Small value function approximation errors can be an obstacle to learning a max-following policy.
\end{observation}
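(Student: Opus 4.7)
The plan is to turn the diagram in Figure~\ref{fig:apx_max_follow_mdp} into a concrete example where the state-wise $\ell_\infty$ approximation error on the constituent value functions can be made arbitrarily small while the argmax-of-approximations policy loses $\Omega(H)$ cumulative reward relative to the true max-following policy. I would frame the statement as: for every target approximation error $\gamma > 0$ and every horizon $H \ge 3$, there exist two policies $\pi^0, \pi^1$ and approximations $\approxv^0, \approxv^1$ with $\max_{s,k}|\approxv^k(s) - V^k(s)| \le \gamma$ such that the greedy policy $\hat \pi(s) = \pi^{\argmax_k \approxv^k(s)}(s)$ satisfies $\E_{s_0 \sim \mu_0}[V^{\max}(s_0) - V^{\hat \pi}(s_0)] \in \Omega(H)$, where $V^{\max}$ is any true max-following policy.

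First I would instantiate the MDP: fix $\varepsilon = \gamma/2$, let $\pi^0$ be the ``red'' policy (actions $s_0 \to s_1$, $s_2 \to s_3$, $s_3 \to s_4$) and $\pi^1$ the ``blue'' policy (actions $s_0 \to s_2$, $s_2 \to s_5$, $s_3 \to s_3$), and take $\mu_0$ to be a point mass on $s_0$. The states $s_1, s_4, s_5$ are absorbing with zero reward per the convention stated in Section~\ref{sec:benchmark}. Next I would compute the time-indexed value functions directly: $V^0_h(s_3) = 0$ and $V^1_h(s_3) = H - h$; $V^0_h(s_2) = \varepsilon$ and $V^1_h(s_2) = 0$; $V^0_0(s_0) = 0$ and $V^1_0(s_0) = \varepsilon$. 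Plugging these into Definition~\ref{def:max-following} shows that any true max-following policy takes $\pi^1$ at $s_0$, $\pi^0$ at $s_2$, and $\pi^1$ at $s_3$, accumulating a total reward $2\varepsilon + (H-2) = \Theta(H)$.

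Then I would construct the approximations. Set $\approxv^k_h = V^k_h$ on all states except $s_0$, and at $s_0$ set $\approxv^0_0(s_0) = \varepsilon$, $\approxv^1_0(s_0) = 0$. Every state-wise error is at most $\varepsilon = \gamma/2 \le \gamma$, so the bound holds, but now $\argmax_k \approxv^k_0(s_0) = 0$, so the greedy policy takes $\pi^0$ at $s_0$, transitions to the absorbing state $s_1$, and collects reward $0$ for the remainder of the episode. The resulting $\Omega(H)$ gap proves the claim. A brief remark would connect this back to Definition~\ref{def:tie-breaking}: the flipped action at $s_0$ chooses $\pi^0$, whose value at $s_0$ is within $\gamma$ of the maximum, so $\hat\pi$ is in the approximate max-following class for $\beta = \gamma$, yet is far worse than any true max-following policy.

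The only step requiring real care is ensuring the constructed approximations are \emph{uniformly} good, not merely good in squared error on some distribution, so that the example lands in the regime contemplated by the observation; this is immediate since we modify $\approxv^k$ only at $s_0$ and the modification is bounded by $\varepsilon$ in absolute value. The driving phenomenon is that the true value gap at $s_0$ is $\varepsilon$, which we can drive to zero independently of $H$, so any fixed pointwise error tolerance eventually dominates the gap at the critical decision state while the downstream consequences of the wrong choice remain $\Theta(H)$.
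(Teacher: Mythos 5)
Your proposal is correct and follows essentially the same route as the paper: the same MDP from Figure~\ref{fig:apx_max_follow_mdp}, the same red/blue constituent policies, and the same adversarial perturbation $\approxv^0_0(s_0)=\varepsilon$, $\approxv^1_0(s_0)=0$ that flips the argmax at $s_0$ and sends the greedy policy into the zero-reward absorbing state $s_1$, versus $2\varepsilon+(H-2)$ for true max-following. Your explicit value computations, the quantitative $\Omega(H)$ gap for arbitrarily small pointwise error, and the remark that the resulting policy still lies in $\pitie$ for $\beta=\gamma$ are all consistent with (and slightly more detailed than) the paper's informal discussion.
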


In Figure~\ref{fig:apx_max_follow_mdp}, we again consider policies $\pi^0(s) = \mathsf{right}$ and $\pi^1(s) = \mathsf{left}$ for all states $s \in \states$, color coding the actions taken by $\pi^0$ with red and $\pi^1$ with blue in Figure~\ref{fig:apx_max_follow_mdp}. For starting state distribution supported entirely on $s_0$, a max-following policy $\pi$ will take action $\pi(s_0) = \mathsf{left}$, $\pi(s_2) = \mathsf{right}$, and $\pi(s_3) = \mathsf{left}$ for the remainder of the episode, obtaining reward $H-2 + 2\varepsilon$. However, given only approximate value functions $\approxv^{k}$ with state-wise absolute error bound $|\approxv_h^k(s) - V_h^k(s)| \leq \varepsilon$ for all states $s$ and times $h$, the policy $\approxpi$ that takes action $\pi_h^{k^*}(s)$ for $k^* = \argmax_{k\in[2]} \approxv_h^k(s)$ can have much lower expected cumulative reward than a max-following policy. For example if  $\approxv^0_0(s_0) = \varepsilon$ and $\approxv^1_0(s_0) = 0$ in our Figure~\ref{fig:apx_max_follow_mdp} example, then $\approxpi$ will have expected return 0.

\begin{observation}\label{obs:value-class}
    A max-following policy's value function is not always of the same parametric class as the constituent policies' value functions.
\end{observation}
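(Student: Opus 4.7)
The plan is to instantiate the MDP sketched in Figure~\ref{fig:pw_linear_parametric} and argue directly that no policy in the parametric class to which both constituent value functions belong can equal the max-following value function. Concretely, I would take the state space to be $\cS = [0,1]$, the action space $\cA = \{-1, +1\}$, and reward function $R(s, -1) = 1-s$ and $R(s, +1) = s$. The transition dynamics can be made irrelevant to the observation by taking any short horizon (for instance, $H = 1$, or absorbing self-loops with zero reward after the first step), so that value functions reduce to one-step rewards.

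Next I would define the two constituent policies as the ``always $-1$'' policy $\pi^0(s) = -1$ and the ``always $+1$'' policy $\pi^1(s) = +1$. Their value functions are then $V^{\pi^0}(s) = 1 - s$ and $V^{\pi^1}(s) = s$, each of which is an affine function of $s$. I would fix the parametric class under discussion to be $\cF = \{s \mapsto as + b : a, b \in \R\}$, the class of affine functions on $[0,1]$, and note that both constituent value functions lie in $\cF$.

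Now I would compute the max-following policy: at each state $s$, it selects the action whose constituent value is larger, so it takes action $-1$ when $s < 1/2$ and action $+1$ when $s > 1/2$. The associated value function is $V^{\max}(s) = \max(s, 1-s)$, which is piecewise linear with a nondifferentiable kink at $s = 1/2$. To finish, I would observe that any affine function $f(s) = as + b$ is differentiable everywhere on $(0,1)$, so $V^{\max} \not\in \cF$; indeed, matching the two linear pieces would force $a = -1$ on $(0, 1/2)$ and $a = +1$ on $(1/2, 1)$, a contradiction. This establishes that the max-following value function need not belong to the parametric class containing the constituent value functions.

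The main obstacle, which is really just a modeling choice rather than a mathematical difficulty, is ensuring the example is unambiguous: the MDP must be specified so that the value functions are genuinely what the figure suggests, without extraneous contributions from future steps that could restore affineness by coincidence. Collapsing to a one-step MDP (or using absorbing zero-reward transitions) cleanly avoids this, and after that the argument reduces to the elementary fact that $\max(s, 1-s)$ is not affine.
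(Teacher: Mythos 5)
Your proposal is correct and is essentially the paper's own argument: the same MDP from Figure~\ref{fig:pw_linear_parametric} with constituent policies $\pi^0(s)=-1$, $\pi^1(s)=+1$, affine value functions, and the observation that $\max(s,1-s)$ is piecewise linear but not affine. The only cosmetic difference is that you collapse to a one-step (or zero-reward absorbing) MDP, while the paper uses self-loops under every action so that $V^k(s)=H\cdot R(s,a_k)$, which is still affine and yields the same conclusion.
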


As a simple first example, consider an MDP with states $\states = [0,1]$ and actions $\actions = \{-1,1\}$. Every action leads to a self-loop (for all $a\in \actions$, $P(s | s, a) = 1$) and for a fixed action, rewards are affine functions of the state (e.g. $R(s,-1) = 1 - s$ and $R(s,1) = s$). We consider two policies: $\pi^0(s) = -1$ and $\pi^1(s) = 1$ for all $s \in \states$. Notice that for episode length $H$, $V^0(s) = HR(s,-1)$ and $V^1(s) = HR(s,1)$. Since the dynamics keep the state at the same fixed place independent of the action, the max-following policy at state $s$ will simply be the max of the two individual value functions at $s$ and therefore its parametric class will be piecewise linear, unlike the constituent policies' which are affine (see Figure~\ref{fig:pw_linear_parametric}). To provide a more complex MDP example, we consider a traditional control problem with continuous state and action spaces: the discrete linear quadratic regulator. In this example the constituent linear policies have quadratic value functions, but the max-following policy is not of the same parametric class. See Appendix~\ref{app:mdp_ex} for further discussion.

\section{Experiments}\label{sec:experiments}
We proceed to examine our MaxIteration algorithm in a set of experiments that uses neural network function approximation as oracles. These experiments aim to provide a scenario to demonstrate the usefulness of max-following. While previous works in this line of research have studied the ability to integrate knowledge from the constituent policies to increase performance of a learnable policy~\citep{cheng2020policy, liu2023active, liu2024blending} our algorithm offers an alternative approach. We consider a common scenario from the field of robotics where one has access to older policies from a robotic simulator that were used in previous projects.  As long as the dynamics of the MDP of interest do not differ, such old policies can be simply be re-used in new applications. In such cases, training completely from scratch can be incredibly expensive due to the vast search space~\citep{schulman2017proximal, haarnoja2018sac}. We note that this setup is related to the one used by~\citet{barreto2017succesor, barreto2020fast} but we do not put any constraints on the reward functions.

\textbf{Experimental setup}~~~A recent robotic simulation benchmark called CompoSuite~\citep{mendez2022composuite} and its corresponding offline datasets~\citep{hussing2023robotic} offer an instantiation of such a scenario. CompoSuite consists of four axes: robot arms, objects, objectives and obstacles. Tasks are simply constructed by combining one element from each axis.We consider tasks with a fixed IIWA robotic manipulator and no obstacle. This leaves us with a total of 16 tasks. These 16 tasks are randomly grouped into pairs of two. Each group is one experiment where the policies trained on tasks correspond to our constituents. To create a new target task, we change one element per task, creating novel combinations for each group. For example, we start with the constituent policies that can 1) put and place a box into a trashcan and 2) push a plate. The target task can be to push the box.
We train our constituent policies on the expert datasets using the offline RL algorithm Implicit Q-learning~\citep{kostrikov2022offline} (IQL). This ensures we obtain very strong constituent policies for their respective tasks. After training the constituents, we run $\maxiteration$ and the baselines for a short amount of time in the simulator. We report mean performance and standard error over 5 seeds using an evaluation of $32$ episodes.

\textbf{Algorithms}~~~For practical purposes, we use a heuristic version of $\maxiteration$ which does not re-compute the max-following policy at every step $h$ but rather after multiple steps.
For our baselines, we ran the code provided by~\citep{liu2023active} to train the MAPS algorithm but were unable to obtain non-trivial return even after a reasonable amount of tuning. MAPS has been shown to have difficulties with leveraging very performant constituent policies such as the ones we are using (see the Walker experiment by~\citet{liu2023active} in Figure 1 (d) in which the algorithm struggles to be competitive with the best, high-return constituent policy). They conjecture that in this case, their estimates of the constituent value functions will be less accurate in early training, resulting in gradient estimates with large bias and variance, weakening their convergence guarantees. 

We provide an evaluation of $\maxiteration$ on tasks originally used by~\citet{liu2023active} in Appendix~\ref{app:exp_dmc}. 

For now, we opt to use IQL's in fine-tuning capabilities that offer a policy improvement style method on top of the best-performing constituent policy for comparison. Fine-tuning provides a strong baseline in the sense that it has access to the already trained value functions of the constituent policies providing it with inherently more starting information. For comparability, we limit the number of episodes available for fine-tuning to the same number of episodes available for training MaxIteration. For more details we refer to Appendix~\ref{app:exp}.

\paragraph{Experimental Results}

Figure~\ref{fig:expres} contains a set of demonstrative results. The full results are deferred to Appendix~\ref{app:exp}. The selected results in Figure~\ref{fig:expres} highlight three properties of $\maxiteration$:
\begin{enumerate}
[leftmargin=10pt,noitemsep,topsep=0pt,parsep=0pt,partopsep=0pt]
    \item There are cases where max-following not only increases the return but actually leads to solving a task successfully even when none of the constituent policies achieve success.
    \item With successful constituent policies, max-following can significantly increase the success rate.
    \item max-following can sometimes increase return but not necessarily lead to success demonstrating the need to better understand which attributes make up good constituent policies in the future.
\end{enumerate}

  \begin{figure}
    \centering
      \includegraphics[width=9cm]{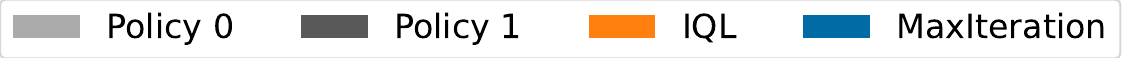} \\
    \includegraphics[width=4cm, trim=0cm 0cm 2.7cm 0cm ,clip]{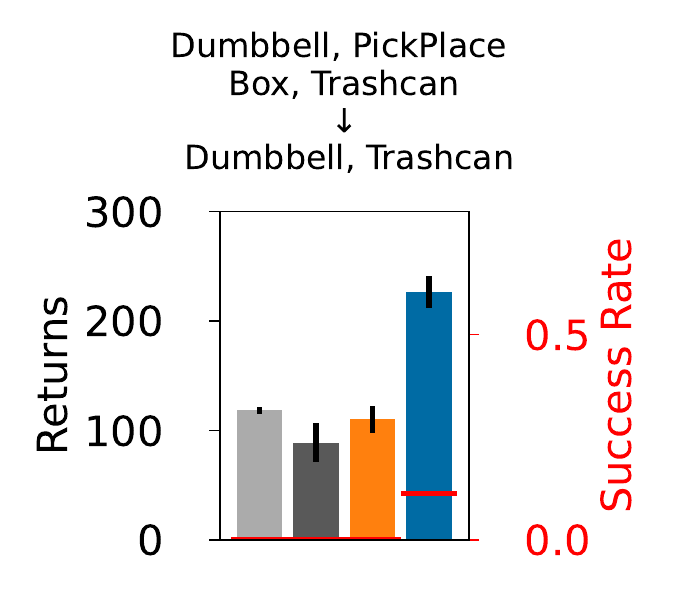}
    \hspace{10pt}
    \includegraphics[width=2.74cm, trim=2.8cm 0cm 2.7cm 0cm ,clip]{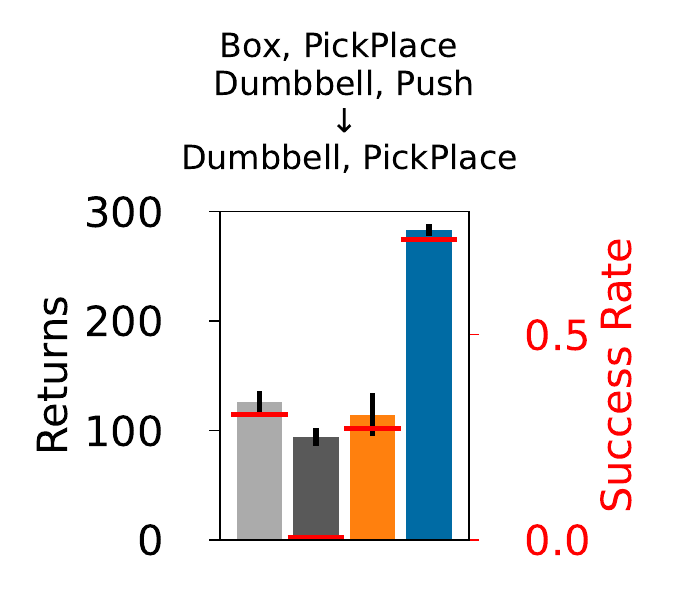}
    \hspace{10pt}
    \includegraphics[width=4cm, trim=2.8cm 0cm 0cm 0cm ,clip]{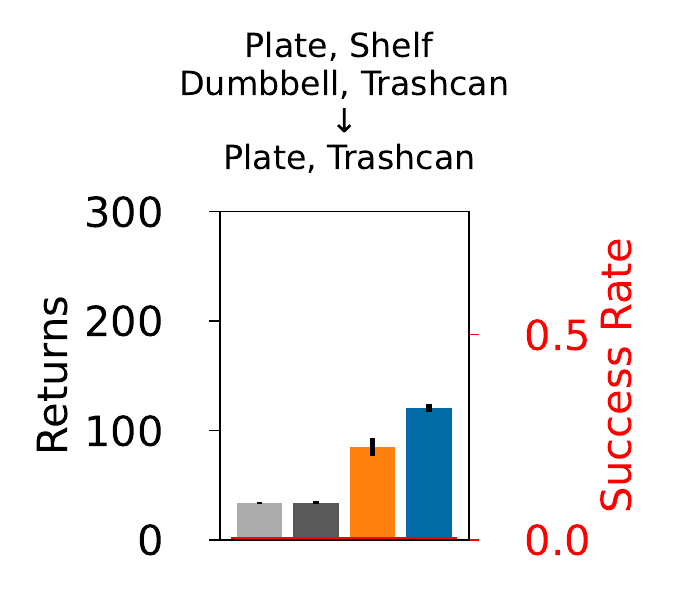} \\ 
    \caption{Mean cumulative return and success over $5$ seeds of $\maxiteration$ compared to fine-tuning IQL on selected tasks. Error-bars correspond to standard error. Full bars correspond to returns and red lines indicate the success rate of each algorithm. $\maxiteration$ can yield improvements in return but increased return does not always yield success.}
    \label{fig:expres}
  \end{figure}

The results in Appendix~\ref{app:exp} demonstrate that in all cases, $\maxiteration$ is at least as good as the best constituent policy which is not  the case for algorithms from prior work~\citep{liu2023active} as discussed earlier. Moreover, $\maxiteration$ consistently leads to greater return improvement than fine-tuning given the same amount of data. Fine-tuning with substantially more resources would  eventually surpass the performance of $\maxiteration$ as $\maxiteration$ is limited to competing with the max-following benchmark which can be suboptimal.

\section{Conclusion} \label{sec:conclusion}
We introduce $\maxiteration$, an algorithm to efficiently learn a policy that is competitive with the approximate max-following benchmark (and hence also with all constituent policies). We provide empirical evidence that max-following utilizing skill-learning enables us to learn how to complete tasks that it would be inefficient to learn from scratch, but that are superior to other individually trained experts for fixed given skills.
\paragraph{Limitations and Future Work}

Our goal in this work has been to learn a policy that competes with an approximate max-following policy under minimal assumptions. However, we still assume efficient batch learnability of constituent value functions, which will not always be feasible in practice. While it seems likely that our oracle assumption is necessary for learning an approximate max-following policy, we leave proving this claim for future work. 
We also leave consideration of alternative ensembling approaches to future work. Max-value ensembling is sensitive to slight differences in the values between constituent policies whereas, e.g., softmax takes into account the relative `weighting' of values. In addition, it would be interesting to characterize the amount of improvement we can obtain over our constituent policies or prove conditions under which our approximate max-following policy is competitive with a true max-following policy or the optimal policy. One could also extend this analysis to ensembling methods like softmax and study the nature of guarantees in that setting.  
Extending beyond MDPs to the partially observable setting, and to the discounted infinite-horizon setting, would also add richness to the class of problems we could consider.
\clearpage

\section*{Acknowledgements and Disclosure of Funding}

The authors are partially supported by ARO grant W911NF2010080, DARPA grant HR001123S0011, the Simons Foundation Collaboration on Algorithmic Fairness, and NSF grants FAI-2147212 and CCF-2217062.

% \acksection{}

\bibliographystyle{plainnat}
\bibliography{cite}

\begin{thebibliography}{38}
\providecommand{\natexlab}[1]{#1}
\providecommand{\url}[1]{\texttt{#1}}
\expandafter\ifx\csname urlstyle\endcsname\relax
  \providecommand{\doi}[1]{doi: #1}\else
  \providecommand{\doi}{doi: \begingroup \urlstyle{rm}\Url}\fi

\bibitem[Amit et~al.(2020)Amit, Meir, and Ciosek]{amit2020discount}
Ron Amit, Ron Meir, and Kamil Ciosek.
\newblock Discount factor as a regularizer in reinforcement learning.
\newblock In Hal~Daumé III and Aarti Singh, editors, \emph{Proceedings of the 37th International Conference on Machine Learning}, volume 119 of \emph{Proceedings of Machine Learning Research}, pages 269--278. PMLR, 13--18 Jul 2020.

\bibitem[Barreto et~al.(2017)Barreto, Dabney, Munos, Hunt, Schaul, van Hasselt, and Silver]{barreto2017succesor}
Andre Barreto, Will Dabney, Remi Munos, Jonathan~J Hunt, Tom Schaul, Hado~P van Hasselt, and David Silver.
\newblock Successor features for transfer in reinforcement learning.
\newblock In I.~Guyon, U.~Von Luxburg, S.~Bengio, H.~Wallach, R.~Fergus, S.~Vishwanathan, and R.~Garnett, editors, \emph{Advances in Neural Information Processing Systems}, volume~30, 2017.

\bibitem[Barreto et~al.(2020)Barreto, Hou, Borsa, Silver, and Precup]{barreto2020fast}
André Barreto, Shaobo Hou, Diana Borsa, David Silver, and Doina Precup.
\newblock Fast reinforcement learning with generalized policy updates.
\newblock \emph{Proceedings of the National Academy of Sciences}, 117\penalty0 (48):\penalty0 30079--30087, 2020.
\newblock \doi{10.1073/pnas.1907370117}.

\bibitem[Bertsekas(2012)]{bertsekas2012dynamic}
Dimitri Bertsekas.
\newblock \emph{Dynamic programming and optimal control: Volume I}, volume~4.
\newblock Athena scientific, 2012.

\bibitem[Brafman and Tennenholtz(2002)]{brafman2002r}
Ronen~I Brafman and Moshe Tennenholtz.
\newblock R-max-a general polynomial time algorithm for near-optimal reinforcement learning.
\newblock \emph{Journal of Machine Learning Research}, 3\penalty0 (Oct):\penalty0 213--231, 2002.

\bibitem[Brukhim et~al.(2022)Brukhim, Hazan, and Singh]{brukhim2022boosting}
Nataly Brukhim, Elad Hazan, and Karan Singh.
\newblock A boosting approach to reinforcement learning.
\newblock \emph{Advances in Neural Information Processing Systems}, 35:\penalty0 33806--33817, 2022.

\bibitem[Chang et~al.(2015)Chang, Krishnamurthy, Agarwal, Daum{\'e}~III, and Langford]{chang2015learning}
Kai-Wei Chang, Akshay Krishnamurthy, Alekh Agarwal, Hal Daum{\'e}~III, and John Langford.
\newblock Learning to search better than your teacher.
\newblock In \emph{International Conference on Machine Learning}, pages 2058--2066. PMLR, 2015.

\bibitem[Chen et~al.(2021)Chen, Wang, Zhou, and Ross]{chen2021randomized}
Xinyue Chen, Che Wang, Zijian Zhou, and Keith~W. Ross.
\newblock Randomized ensembled double q-learning: Learning fast without a model.
\newblock In \emph{International Conference on Learning Representations}, 2021.

\bibitem[Cheng et~al.(2020)Cheng, Kolobov, and Agarwal]{cheng2020policy}
Ching-An Cheng, Andrey Kolobov, and Alekh Agarwal.
\newblock Policy improvement via imitation of multiple oracles.
\newblock \emph{Advances in Neural Information Processing Systems}, 33:\penalty0 5587--5598, 2020.

\bibitem[Domingues et~al.(2021)Domingues, M{\'e}nard, Kaufmann, and Valko]{domingues2021episodic}
Omar~Darwiche Domingues, Pierre M{\'e}nard, Emilie Kaufmann, and Michal Valko.
\newblock Episodic reinforcement learning in finite mdps: Minimax lower bounds revisited.
\newblock In \emph{Algorithmic Learning Theory}, pages 578--598. PMLR, 2021.

\bibitem[Du et~al.(2019)Du, Kakade, Wang, and Yang]{du2019good}
Simon~S Du, Sham~M Kakade, Ruosong Wang, and Lin~F Yang.
\newblock Is a good representation sufficient for sample efficient reinforcement learning?
\newblock \emph{arXiv preprint arXiv:1910.03016}, 2019.

\bibitem[Freund and Schapire(1997)]{freund1997decision}
Yoav Freund and Robert~E Schapire.
\newblock A decision-theoretic generalization of on-line learning and an application to boosting.
\newblock \emph{Journal of computer and system sciences}, 55\penalty0 (1):\penalty0 119--139, 1997.

\bibitem[Glorot et~al.(2011)Glorot, Bordes, and Bengio]{glorot2011deep}
Xavier Glorot, Antoine Bordes, and Yoshua Bengio.
\newblock Deep sparse rectifier neural networks.
\newblock In Geoffrey Gordon, David Dunson, and Miroslav Dudík, editors, \emph{Proceedings of the Fourteenth International Conference on Artificial Intelligence and Statistics}, volume~15 of \emph{Proceedings of Machine Learning Research}, pages 315--323, Fort Lauderdale, FL, USA, 11--13 Apr 2011. PMLR.

\bibitem[Golowich et~al.(2024)Golowich, Moitra, and Rohatgi]{golowich2024exploration}
Noah Golowich, Ankur Moitra, and Dhruv Rohatgi.
\newblock Exploration is harder than prediction: Cryptographically separating reinforcement learning from supervised learning.
\newblock \emph{arXiv preprint arXiv:2404.03774}, 2024.

\bibitem[Haarnoja et~al.(2018)Haarnoja, Zhou, Abbeel, and Levine]{haarnoja2018sac}
Tuomas Haarnoja, Aurick Zhou, Pieter Abbeel, and Sergey Levine.
\newblock Soft actor-critic: Off-policy maximum entropy deep reinforcement learning with a stochastic actor.
\newblock In Jennifer Dy and Andreas Krause, editors, \emph{Proceedings of the 35th International Conference on Machine Learning}, volume~80 of \emph{Proceedings of Machine Learning Research}, pages 1861--1870. PMLR, 10--15 Jul 2018.

\bibitem[Hiraoka et~al.(2022)Hiraoka, Imagawa, Hashimoto, Onishi, and Tsuruoka]{hiraoka2022dropout}
Takuya Hiraoka, Takahisa Imagawa, Taisei Hashimoto, Takashi Onishi, and Yoshimasa Tsuruoka.
\newblock Dropout q-functions for doubly efficient reinforcement learning.
\newblock In \emph{International Conference on Learning Representations}, 2022.

\bibitem[Hussing et~al.(2023)Hussing, Mendez, Singrodia, Kent, and Eaton]{hussing2023robotic}
Marcel Hussing, Jorge~A. Mendez, Anisha Singrodia, Cassandra Kent, and Eric Eaton.
\newblock Robotic manipulation datasets for offline compositional reinforcement learning, 2023.

\bibitem[Jaksch et~al.(2010)Jaksch, Ortner, and Auer]{jaksch10aregret}
Thomas Jaksch, Ronald Ortner, and Peter Auer.
\newblock Near-optimal regret bounds for reinforcement learning.
\newblock \emph{Journal of Machine Learning Research}, 11\penalty0 (51):\penalty0 1563--1600, 2010.

\bibitem[Kane et~al.(2022)Kane, Liu, Lovett, and Mahajan]{kane2022computational}
Daniel Kane, Sihan Liu, Shachar Lovett, and Gaurav Mahajan.
\newblock Computational-statistical gap in reinforcement learning.
\newblock In \emph{Conference on Learning Theory}, pages 1282--1302. PMLR, 2022.

\bibitem[Kearns and Singh(2002)]{kearns2002near}
Michael Kearns and Satinder Singh.
\newblock Near-optimal reinforcement learning in polynomial time.
\newblock \emph{Machine learning}, 49:\penalty0 209--232, 2002.

\bibitem[Kostrikov et~al.(2022)Kostrikov, Nair, and Levine]{kostrikov2022offline}
Ilya Kostrikov, Ashvin Nair, and Sergey Levine.
\newblock Offline reinforcement learning with implicit q-learning.
\newblock In \emph{International Conference on Learning Representations}, 2022.
\newblock URL \url{https://openreview.net/forum?id=68n2s9ZJWF8}.

\bibitem[Kurenkov et~al.(2019)Kurenkov, Mandlekar, Martin-Martin, Savarese, and Garg]{kurenkov2019ac}
Andrey Kurenkov, Ajay Mandlekar, Roberto Martin-Martin, Silvio Savarese, and Animesh Garg.
\newblock Ac-teach: A bayesian actor-critic method for policy learning with an ensemble of suboptimal teachers.
\newblock \emph{arXiv preprint arXiv:1909.04121}, 2019.

\bibitem[Lattimore and Hutter(2012)]{lattimore2012pacbounds}
Tor Lattimore and Marcus Hutter.
\newblock Pac bounds for discounted mdps.
\newblock In \emph{Algorithmic Learning Theory}, pages 320--334, Berlin, Heidelberg, 2012. Springer Berlin Heidelberg.
\newblock ISBN 978-3-642-34106-9.

\bibitem[Lee et~al.(2021)Lee, Laskin, Srinivas, and Abbeel]{lee2021sunrise}
Kimin Lee, Michael Laskin, Aravind Srinivas, and Pieter Abbeel.
\newblock Sunrise: A simple unified framework for ensemble learning in deep reinforcement learning.
\newblock In \emph{International Conference on Machine Learning}. PMLR, 2021.

\bibitem[Li et~al.(2018)Li, Mueller, Casser, Smith, Michels, and Ghanem]{li2018oil}
Guohao Li, Matthias Mueller, Vincent Casser, Neil Smith, Dominik~L Michels, and Bernard Ghanem.
\newblock Oil: Observational imitation learning.
\newblock \emph{arXiv preprint arXiv:1803.01129}, 2018.

\bibitem[Liu et~al.(2023)Liu, Yoneda, Wang, Walter, and Chen]{liu2023active}
Xuefeng Liu, Takuma Yoneda, Chaoqi Wang, Matthew Walter, and Yuxin Chen.
\newblock Active policy improvement from multiple black-box oracles.
\newblock In \emph{International Conference on Machine Learning}, pages 22320--22337. PMLR, 2023.

\bibitem[Liu et~al.(2024)Liu, Yoneda, Stevens, Walter, and Chen]{liu2024blending}
Xuefeng Liu, Takuma Yoneda, Rick Stevens, Matthew Walter, and Yuxin Chen.
\newblock Blending imitation and reinforcement learning for robust policy improvement.
\newblock In \emph{The Twelfth International Conference on Learning Representations}, 2024.
\newblock URL \url{https://openreview.net/forum?id=eJ0dzPJq1F}.

\bibitem[Mendez et~al.(2022)Mendez, Hussing, Gummadi, and Eaton]{mendez2022composuite}
Jorge~A. Mendez, Marcel Hussing, Meghna Gummadi, and Eric Eaton.
\newblock Composuite: A compositional reinforcement learning benchmark.
\newblock In \emph{1st Conference on Lifelong Learning Agents}, 2022.

\bibitem[Peer et~al.(2021)Peer, Tessler, Merlis, and Meir]{peer2021ensemble}
Oren Peer, Chen Tessler, Nadav Merlis, and Ron Meir.
\newblock Ensemble bootstrapping for q-learning.
\newblock In Marina Meila and Tong Zhang, editors, \emph{Proceedings of the 38th International Conference on Machine Learning}, volume 139 of \emph{Proceedings of Machine Learning Research}, pages 8454--8463. PMLR, 18--24 Jul 2021.

\bibitem[Puterman(1994)]{puterman1994markov}
Martin~L. Puterman.
\newblock \emph{Markov Decision Processes: Discrete Stochastic Dynamic Programming}.
\newblock John Wiley \& Sons, Inc., USA, 1st edition, 1994.
\newblock ISBN 0471619779.

\bibitem[Schulman et~al.(2015)Schulman, Moritz, Levine, Jordan, and Abbeel]{schulman2015high}
John Schulman, Philipp Moritz, Sergey Levine, Michael Jordan, and Pieter Abbeel.
\newblock High-dimensional continuous control using generalized advantage estimation.
\newblock \emph{arXiv preprint arXiv:1506.02438}, 2015.

\bibitem[Schulman et~al.(2017)Schulman, Wolski, Dhariwal, Radford, and Klimov]{schulman2017proximal}
John Schulman, Filip Wolski, Prafulla Dhariwal, Alec Radford, and Oleg Klimov.
\newblock Proximal policy optimization algorithms.
\newblock \emph{CoRR}, abs/1707.06347, 2017.
\newblock URL \url{http://arxiv.org/abs/1707.06347}.

\bibitem[Seno and Imai(2022)]{seno2022d3rlpy}
Takuma Seno and Michita Imai.
\newblock d3rlpy: An offline deep reinforcement learning library.
\newblock \emph{Journal of Machine Learning Research}, 23\penalty0 (315):\penalty0 1--20, 2022.
\newblock URL \url{http://jmlr.org/papers/v23/22-0017.html}.

\bibitem[Song et~al.(2023)Song, Suganthan, Pedrycz, Ou, He, Chen, and Wu]{song2023ensemble}
Yanjie Song, Ponnuthurai~Nagaratnam Suganthan, Witold Pedrycz, Junwei Ou, Yongming He, Yingwu Chen, and Yutong Wu.
\newblock Ensemble reinforcement learning: A survey.
\newblock \emph{Applied Soft Computing}, page 110975, 2023.

\bibitem[Sun et~al.(2017)Sun, Venkatraman, Gordon, Boots, and Bagnell]{sun2017deeply}
Wen Sun, Arun Venkatraman, Geoffrey~J Gordon, Byron Boots, and J~Andrew Bagnell.
\newblock Deeply aggrevated: Differentiable imitation learning for sequential prediction.
\newblock In \emph{International conference on machine learning}, pages 3309--3318. PMLR, 2017.

\bibitem[Sutton and Barto(2018)]{sutton2018reinforcement}
Richard~S Sutton and Andrew~G Barto.
\newblock \emph{Reinforcement learning: An introduction}.
\newblock MIT press, 2018.

\bibitem[Tunyasuvunakool et~al.(2020)Tunyasuvunakool, Muldal, Doron, Liu, Bohez, Merel, Erez, Lillicrap, Heess, and Tassa]{tunyasuvunakool2020dmcontrol}
Saran Tunyasuvunakool, Alistair Muldal, Yotam Doron, Siqi Liu, Steven Bohez, Josh Merel, Tom Erez, Timothy Lillicrap, Nicolas Heess, and Yuval Tassa.
\newblock dm\_control: Software and tasks for continuous control.
\newblock \emph{Software Impacts}, 6:\penalty0 100022, 2020.
\newblock ISSN 2665-9638.
\newblock \doi{https://doi.org/10.1016/j.simpa.2020.100022}.

\bibitem[Wu et~al.(2018)Wu, Rajeswaran, Duan, Kumar, Bayen, Kakade, Mordatch, and Abbeel]{wu2018variance}
Cathy Wu, Aravind Rajeswaran, Yan Duan, Vikash Kumar, Alexandre~M Bayen, Sham Kakade, Igor Mordatch, and Pieter Abbeel.
\newblock Variance reduction for policy gradient with action-dependent factorized baselines.
\newblock \emph{arXiv preprint arXiv:1803.07246}, 2018.

\end{thebibliography}

\clearpage

\appendix

\section{MDP Examples}\label{app:mdp_ex}
\subsection{LQR max-following parametric class vs. constituent policies}
\begin{align*}
    \min_{\{u_t\}_{t=0}^\infty} \quad &\sum_{t=0}^\infty \gamma^t(x_t^TQx_t+u_t^TRu_t) \\
    \text{subject to} \quad & x_{t+1} = Ax_t+Bu_t+w_t,
\end{align*}

To motivate the use of max-following policies in a richer class of MDPs, we consider a traditional control problem with continuous state and action spaces: the discrete linear quadratic regulator. Note that here we analyze the infinite horizon discounted case so that we can analyze the time-invariant value function, but episodic analogues exist.
Consider the following setting where $\gamma \in [0,1]$ is a discount factor, and $w_t \sim \mathcal{N}(\textbf{0},\sigma^2 I)$. Here, we consider the simple case where $Q,R,A=I$ and $B=(1+\epsilon)I$. We know that the optimal policy is of the form $u = -K^*x$ \citep{bertsekas2012dynamic} and we set two policies that are only stable along one component and unstable along the other of the form $u_1 = -K_1 x$ and $u_2 = -K_2x$. It is important to note that the value functions of the individual policies and the optimal policies have exact quadratic forms like $V(x) = x^T Px+q$, but the max-following policy is not necessarily within the same parametric class. For example, $P_1$ is the solution to the Lyapunov equation $P_1=(I+K_1^TK_1+\gamma(A-K_1)^TP_1(A-K_1))$ and $q_1 = \frac{\gamma}{1-\gamma}\sigma^2\tr(P_1)$. A similar formula exists for policy $2$. 

In LQR, for the $K_1,K_2$ controllers described above, a max-following policy is able to attain higher value than the individual expert policies that have an unstable direction in one axis. Moreover, we see that the optimal policy is obviously superior to all the other policies, but that a max-following policy is more competitive with it than the other individual expert policies. A max-following policy is ultimately able to benefit from the stabilizing component of each axis of the individual policies, which ultimately lets it perform better than any given individual one. 
\section{Additional Proofs}\label{app:proofs}

\approxmaxvsfixed*

\begin{proof}
We will prove the claim inductively, showing that for all $C \in [H]$, if we run any approximate max-following policy for $C$ steps, and then continue following the policy $\pi^k$ chosen at step $C$ for the rest of the episode, then our expected return is not much worse than if we had followed any fixed $\pi^k$ for the whole episode. 

Somewhat more formally, recalling the definition of the set of approximate max-following policies $\pitie$ (Definition~\ref{def:tie-breaking}), at every time $h \in [H]$ and state $s \in \states$, a policy $\pi \in \pitie$ takes action $\pi_h^t(s)$ for a $\pi^t \in \Pi^k$ such that $V^t_h(s) \geq \max_{k \in [K]}V_h^k(s) - \beta$. 
Letting $\pi^{t(s, h)}$ denote the $\pi^t \in \Pi^k$ that $\pi$ follows at state $s$ and time $h$,  
we will show that if at some step $C \in [H]$ we have 
\[\E_{s_0\sim \mu_0, P}\left[\sum_{h=0}^{C} R(s_h, \pi_h(s_h)) + \sum_{h=C+1}^{H-1} R(s_h, \pi_h^{t(s_C,C)}(s_h)) \right] \geq \max_
{k\in[K]}\E_{s_0\sim \mu_0}\left[V^k(s_0) \right] - O(\tfrac{\varepsilon (C+1)}{H}), \]
for all $\pi \in \pitie$, then the same holds for $C+1$ for all $\pi$.  

In the base case, $C=0$, the claim
\[\E_{s_0 \sim \mu_0, P} \left[ \sum_{h=0}^{H-1} R(s_h, \pi_h^{t(s_0, 0)}(s_h)) \right] \geq \max_{k\in[K]} \E_{s_0\sim \mu_0}\left[V^k(s_0) \right] - O(\tfrac{\varepsilon}{H}) \]
for all $\pi \in \pitie$ and all $\pi^k \in \Pi^k$, follows straightforwardly from the definition of $\pitie$ and setting of $\beta \in \Theta(\tfrac{\varepsilon}{H})$,
since
\begin{align*}
\E_{s_0 \sim \mu_0, P} \left[ \sum_{h=0}^{H-1} R(s_h, \pi_h^{t(s_0, 0)}(s_h)) \right] 
&= \E_{s_0 \sim \mu_0}[V^{\pi^{t(s_0, 0)}}(s_0)]  \\
&\geq \E_{s_0\sim \mu_0}\left[\max_{k\in [K]}V^k(s_0) - O(\tfrac{\varepsilon}{H}) \right] \\
&\geq \max_{k\in[K]} \E_{s_0\sim \mu_0}\left[V^k(s_0) \right] - O(\tfrac{\varepsilon}{H}).
\end{align*}

We now prove the inductive step. We wish to show that if at step $C$, we have for some $\pi \in \pitie$
\[\E_{s_0\sim \mu_0, P}\left[\sum_{h=0}^{C} R(s_h, \pi_h(s_h)) + \sum_{h=C+1}^{H-1} R(s_h, \pi_h^{t(s_C,C)}(s_h)) \right] \geq \max_
{k\in[K]}\E_{s_0\sim \mu_0}\left[V^k(s) \right] - O(\tfrac{\varepsilon (C+1)}{H}), \]
 then continuing to follow $\pi$ at step $C+1$ and following $\pi^{t(s_{C+1}, C+1)}$ thereafter reduces expected return by  $O(\tfrac{\varepsilon}{H})$. 
Now if $\pi_{C+1}(s_{C+1})= \pi_{C+1}^t(s_{C+1}) $ for $\pi^t \in \Pi^k$, it must be the case that 
\[V_{C+1}^t(s_{C+1}) \geq \max_{k \in [K]}V^k_{C+1}(s_{C+1}) - O(\tfrac{\varepsilon}{H}),\] 
otherwise $\pi \not\in \pitie$. It follows that

\begin{align*}
\E_{s_0\sim \mu_0, P}&\left[\sum_{h=0}^{C+1} R(s_h, \pi_h(s_h)) + \sum_{h=C+2}^{H-1} R(s_h, \pi_h^{t(s_{C+1},C+1)}(s_h)) \right]\\
&=\E_{s_0\sim \mu_0, P}\left[\sum_{h=0}^{C} R(s_h, \pi_h(s_h)) + V_{C+1}^{t(s_{C+1},C+1)}(s_{C+1}) \right] &&\text{(by definition of $V$ and $\pi_{C+1}(s_{C+1})$)} \\
& \geq \E_{s_0\sim \mu_0, P}\left[\sum_{h=0}^{C} R(s_h, \pi_h(s_h)) + \max_{k\in[K]}V_{C+1}^{k}(s_{C+1}) - O(\tfrac{\varepsilon}{H}) \right] &&\text{(from $\pi \in \pitie$)}\\
& \geq \E_{s_0\sim \mu_0, P}\left[\sum_{h=0}^{C} R(s_h, \pi_h(s_h)) + V_{C+1}^{t(s_C, C)}(s_{C+1}) - O(\tfrac{\varepsilon}{H}) \right] \\
&= \E_{s_0\sim \mu_0, P}\left[\sum_{h=0}^{C} R(s_h, \pi_h(s_h)) + \sum_{h=C+1}^{H-1} R(s_h, \pi_h^{t(s_C,C)}(s_h)) \right] - O(\tfrac{\varepsilon}{H}) &&\text{(by definition of $V$)} \\
&\geq \max_
{k\in[K]}\E_{s_0\sim \mu_0}\left[V^k(s) \right] - O(\tfrac{\varepsilon (C+2)}{H}) \quad \quad \quad \quad \quad\quad \quad \quad \quad \quad  &&\text{ (by inductive hypothesis)} 
\end{align*}

and so the claim holds for time $C+1$, for any $\pi \in \pitie$ for which it holds for time $C$. We showed the base case $C=0$ hold for all $\pi \in \pitie$, and therefore we have 
\[\E_{s_0\sim \mu_0, P}\left[\sum_{h=0}^{C} R(s_h, \pi_h(s_h)) + \sum_{h=C+1}^{H-1} R(s_h, \pi_h^{t(s_{C},C)}(s_h)) \right] \geq \max_
{k\in[K]}\E_{s_0\sim \mu_0}\left[V^k(s) \right] - O(\tfrac{\varepsilon (C+1)}{H}) \]
for all $C\in [H]$. In particular, for $C = H-1$ we conclude that 
\[\E_{s_0\sim \mu_0, P}\left[\sum_{h=0}^{C} R(s_h, \pi_h(s_h))  \right] \geq \max_
{k\in[K]}\E_{s_0\sim \mu_0}\left[V^k(s) \right] - O(\varepsilon) \]
 and it follows that
 \[\min_{\pi \in \pitie} \E_{s_0 \sim \mu_0}\left[V^{\hat{\pi}}(s_0)\right]  \geq \max_{k\in[K]} \E_{s_0 \sim \mu_0}\left[V^{k}(s_0) \right] - O(\varepsilon).\]

\end{proof}

\section{Additional information about experiments} \label{app:exp}

For our experiments, we use a heuristic version of $\maxiteration$ that operates in rounds. First, the algorithm collects a set of trajectories using every policy to initialize the respective value functions. Then, in every round the algorithm for every policy exectues the max-following policy for $\beta$ steps and the switches to the respective constituent policy. At the end of each round, value functions of constituent policies are updated. $\beta$ is uniformly spaced along the full horizon and thus, depends on the number of rounds and the horizon. The total number of episodes is an upper bound on the number of samples collected which is what we determine to compare run-times between $\maxiteration$ and IQL. Finally, we use a $\gamma$ discounting which has been shown to have regularizing effects on the value function updates~\citep{amit2020discount}.

For IQL, we use the d3rlpy implementations~\citep{seno2022d3rlpy} and code provided by~\citet{hussing2023robotic}.
\subsection{Hyperparameters} \label{app:exp_hparam}

Both algorithms are run for $10,000$ steps initially (to initialize value functions for $\maxiteration$ and to pre-fill the buffer for IQL) before doing updates and then for $50,000$ steps for online training. 

All neural networks use ReLU~\citep{glorot2011deep} Multi-layer perceptrons with $2$ layers and a hidden dimension of $256$ per layer.

\begin{table}[H]
    \hfill
    \centering
    \caption{Hyperparameters for $\maxiteration$}
    \begin{tabular}{ | m{4.5cm} | m{2cm} |}   
      \hline
      Optimizer & Adam \\ 
      \hline
      Adam $\beta_1$ & $0.9$ \\ 
      \hline
      Adam $\beta_2$ & $0.999$ \\ 
      \hline
      Adam $\varepsilon$ & $1e-8$ \\ 
      \hline
      Value Function Learning Rate & $1e-4$ \\ 
      \hline
      Number of rounds & 50 \\ 
      \hline
      Number of gradient steps per round & 40,000 \\ 
      \hline
      Batch Size & $64$ \\ 
      \hline
      $\gamma$ & $0.99$ \\
      \hline
    \end{tabular}
    \label{tab:maxiter_hyper}
\end{table}
\begin{table}[H]
    \hfill
    \centering
    \caption{Hyperparameters for Implicit Q-Learning}
    \begin{tabular}{ | m{4.5cm} | m{2cm} |}   
      \hline
      Optimizer & Adam \\ 
      \hline
      Adam $\beta_1$ & $0.9$ \\ 
      \hline
      Adam $\beta_2$ & $0.999$ \\ 
      \hline
      Adam $\varepsilon$ & $1e-8$ \\ 
      \hline
      Actor Learning Rate & $4e-3$ \\ 
      \hline
      Critic Learning Rate & $4e-3$ \\ 
      \hline
      Batch Size
      & \#Tasks $\times 256$ \\ 
      \hline
      n\_steps & $1$ \\
      \hline
      $\gamma$ & $0.99$ \\
      \hline
      $\tau$ & $0.005$ \\
      \hline
      n\_critics & $2$ \\
      \hline
      expectile & $0.7$ \\
      \hline
      weight\_temp & $3.0$ \\
      \hline
      max\_weight & $100$ \\
      \hline
    \end{tabular}
    \label{tab:iql_hyper}
\end{table}
\subsection{Full results on CompoSuite}  
\label{app:exp_full}
\begin{figure}[H]
    \centering
    \includegraphics[width=9cm]{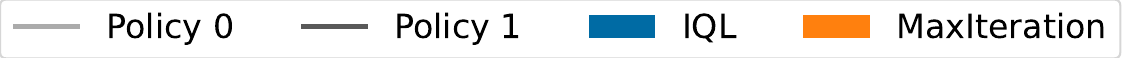} \\
    \includegraphics[width=3.6cm, trim=0cm 0cm 3.1cm 0cm ,clip]{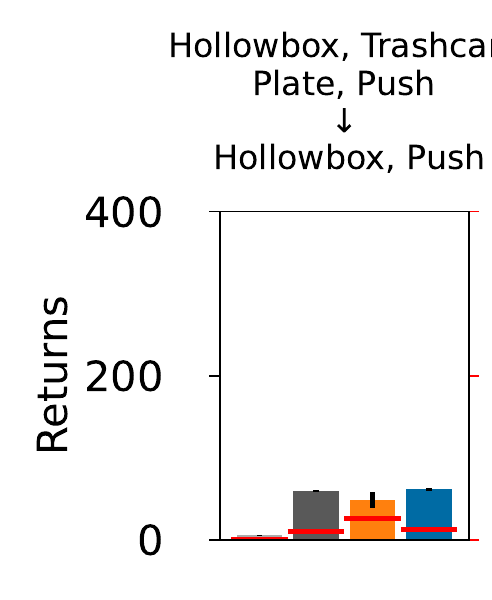}
    \includegraphics[width=2.25cm, trim=3.1cm 0cm 3.1cm 0cm ,clip]{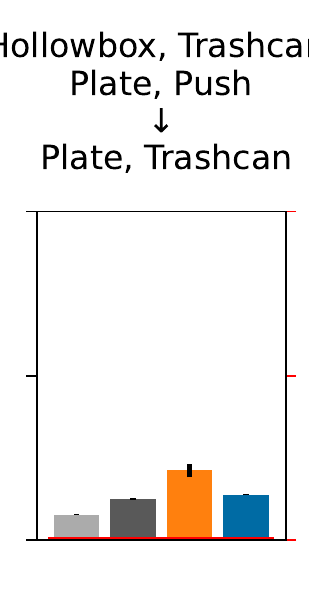}
    \includegraphics[width=2.25cm, trim=3.1cm 0cm 3.1cm 0cm ,clip]{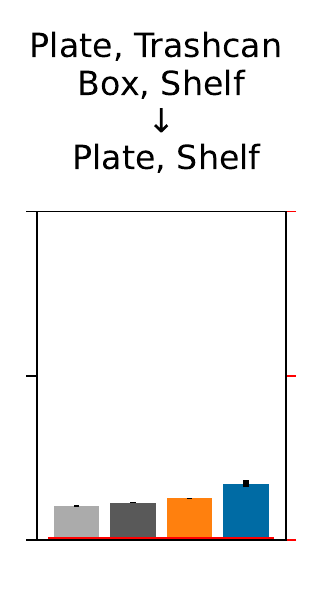} 
    \includegraphics[width=3.6cm, trim=3.1cm 0cm 0cm 0cm ,clip]{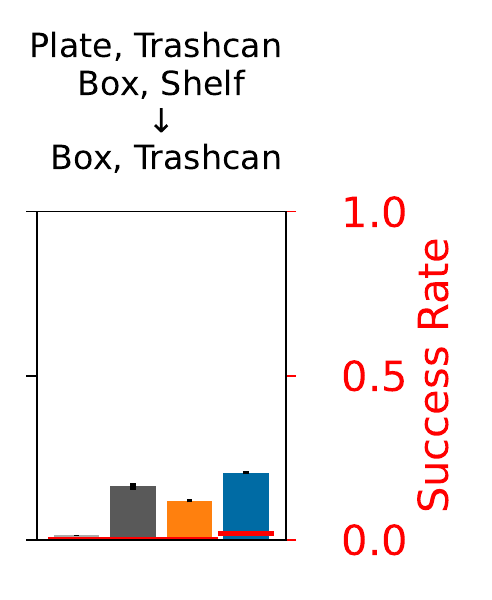} \\ 
    \includegraphics[width=3.6cm, trim=0cm 0cm 3.1cm 0cm ,clip]{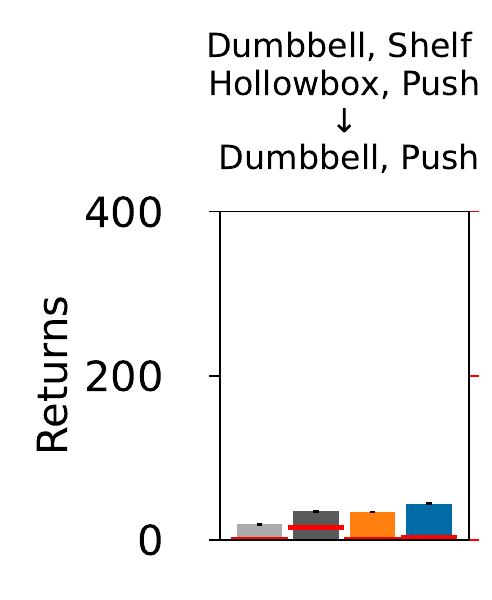}
    \includegraphics[width=2.25cm, trim=3.1cm 0cm 3.1cm 0cm ,clip]{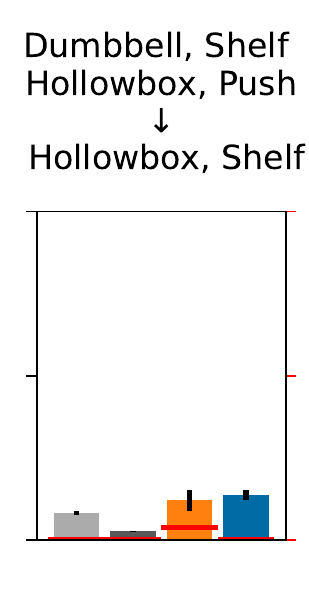}
    \includegraphics[width=2.25cm, trim=3.1cm 0cm 3.1cm 0cm ,clip]{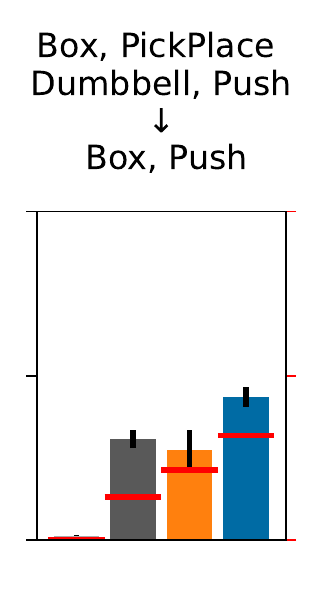} 
    \includegraphics[width=3.6cm, trim=3.1cm 0cm 0cm 0cm ,clip]{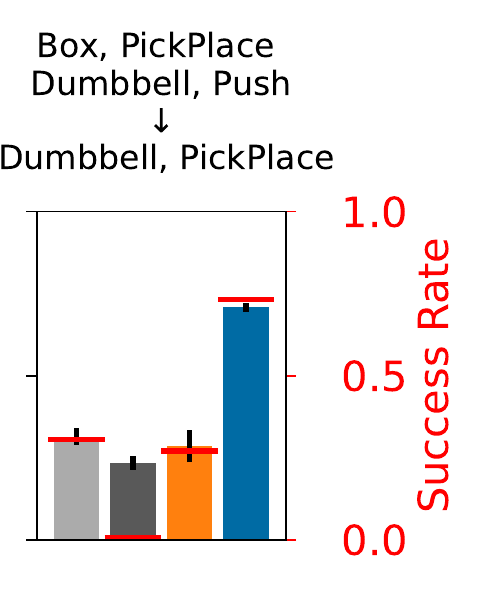} \\ 
    \includegraphics[width=3.6cm, trim=0cm 0cm 3.1cm 0cm ,clip]{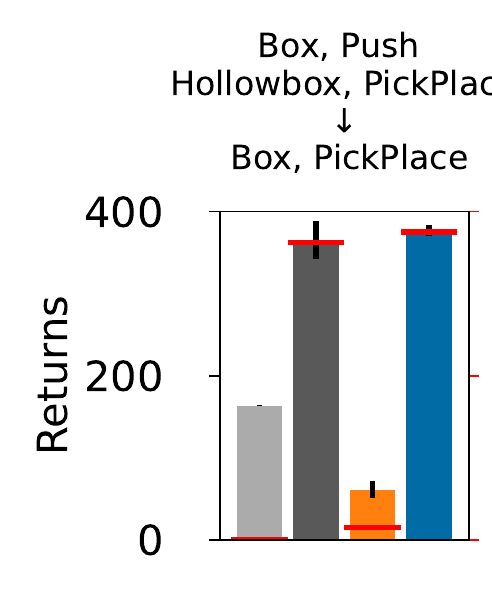}
    \includegraphics[width=2.25cm, trim=3.1cm 0cm 3.1cm 0cm ,clip]{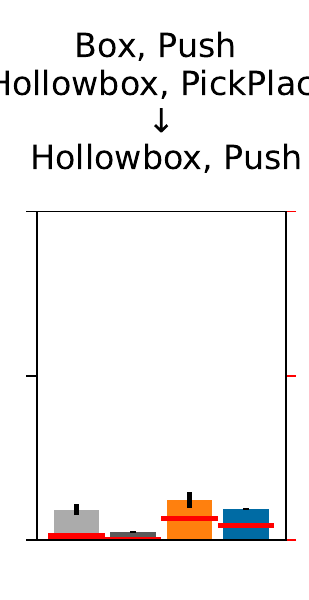}
    \includegraphics[width=2.25cm, trim=3.1cm 0cm 3.1cm 0cm ,clip]{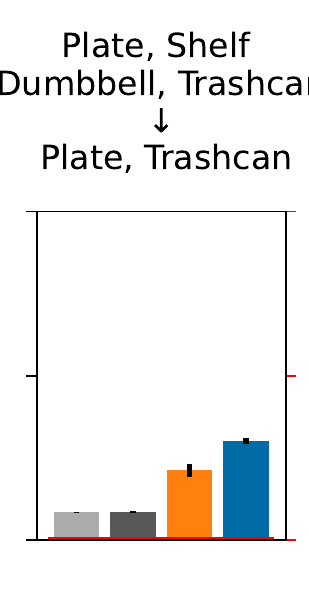} 
    \includegraphics[width=3.6cm, trim=3.1cm 0cm 0cm 0cm ,clip]{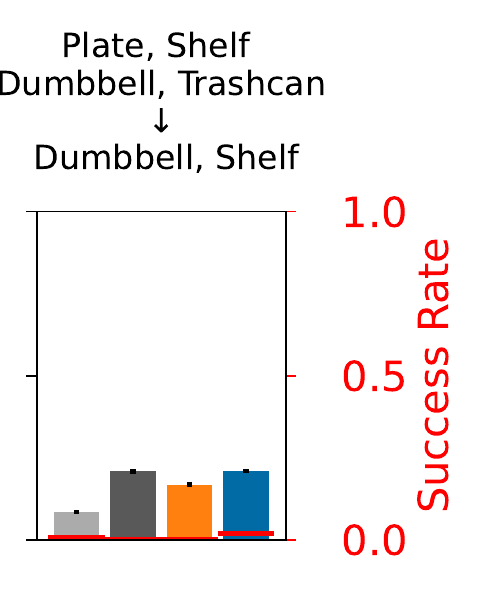} \\ 
    \includegraphics[width=3.6cm, trim=0cm 0cm 3.1cm 0cm ,clip]{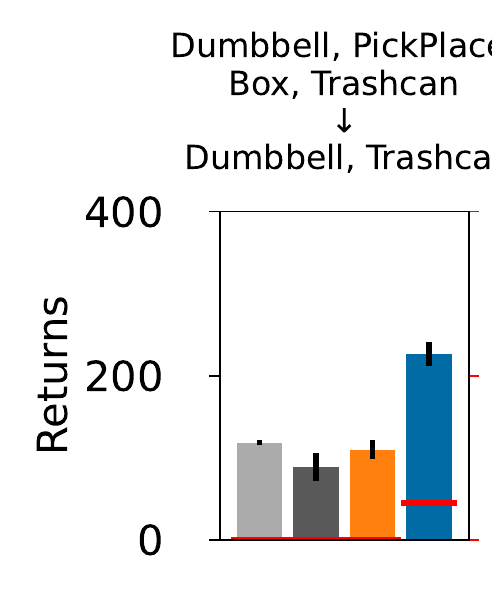}
    \includegraphics[width=2.25cm, trim=3.1cm 0cm 3.1cm 0cm ,clip]{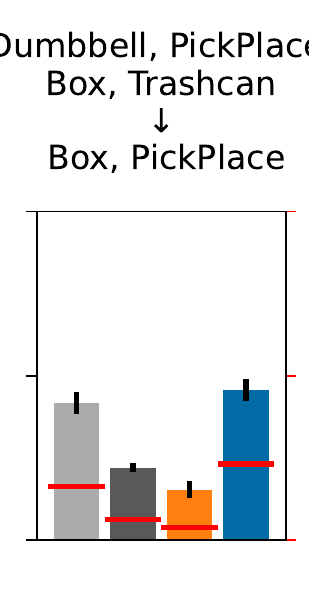}
    \includegraphics[width=2.25cm, trim=3.1cm 0cm 3.1cm 0cm ,clip]{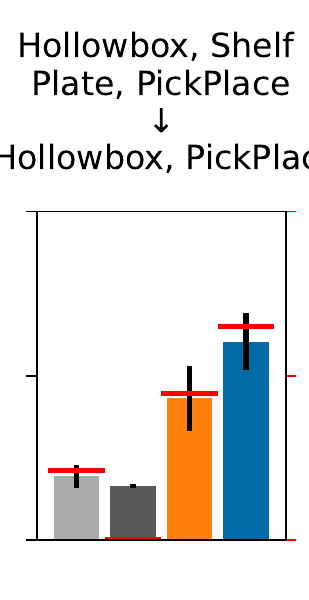} 
    \includegraphics[width=3.6cm, trim=3.1cm 0cm 0cm 0cm ,clip]{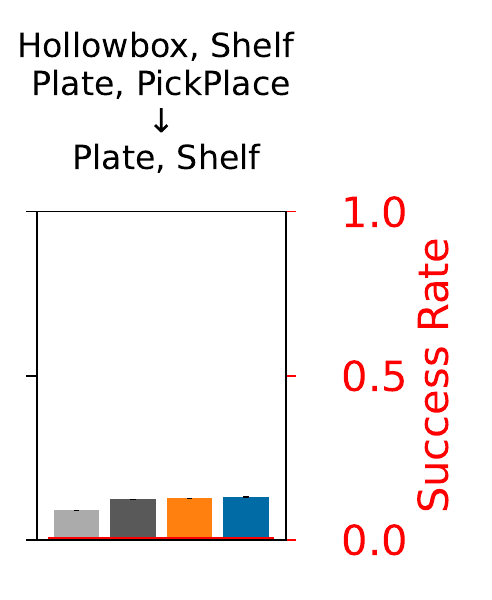} \\ 
     \caption{Mean cumulative return and success over $5$ seeds of $\maxiteration$ compared to fine-tuning IQL on all considered tasks. Error-bars correspond to standard error. Full bars correspond to returns and red lines indicate the success rate of each algorithm. }
    \label{fig:expresfull}
\end{figure}

\subsection{Results on DM Control} \label{app:exp_dmc}

We run our MaxIteration algorithm on the DM Control benchmarks~\citep{tunyasuvunakool2020dmcontrol} similar to the MAPS~\citep{liu2023active} setup. In their setup, the constituent policies correspond to different $3$ checkpointed models in one run of the online Soft-Actor critic~\citep{haarnoja2018sac} algorithm. As a result, it is generally true that the latest checkpointed model will outperform the previous two checkpoints meaning one constituent policy is strictly better everywhere than the others. We report the final performance over 5 seeds using 16 evaluation trajectories in Figure~\ref{fig:expres_dmc}. The results show that our algorithm behaves as expected and always uses the best oracle. Without policy improvement operator, this setup does not allow us to exceed the performance of the constituent policies.

\begin{figure}[H]
    \centering
    \hspace{20pt}\includegraphics[width=8cm]{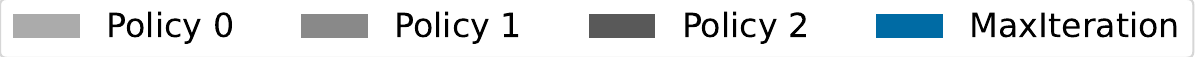} \\
    \includegraphics[width=3.5cm, trim=0cm 0cm 0cm 0cm ,clip]{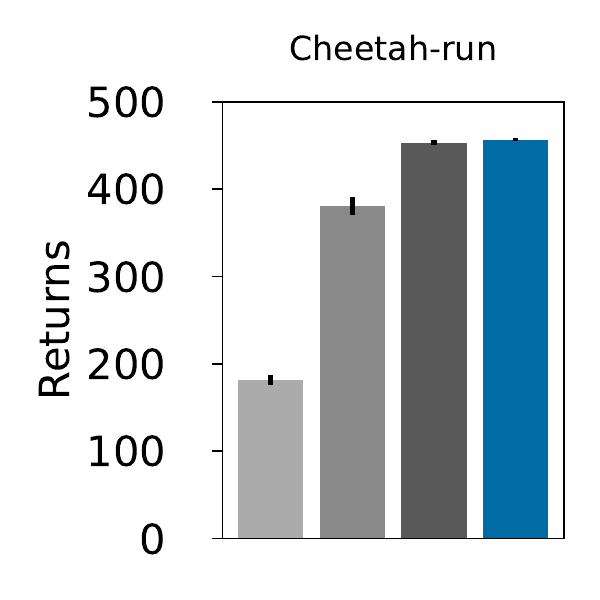}
    \includegraphics[width=2.5cm, trim=3.1cm 0cm 0cm 0cm ,clip]{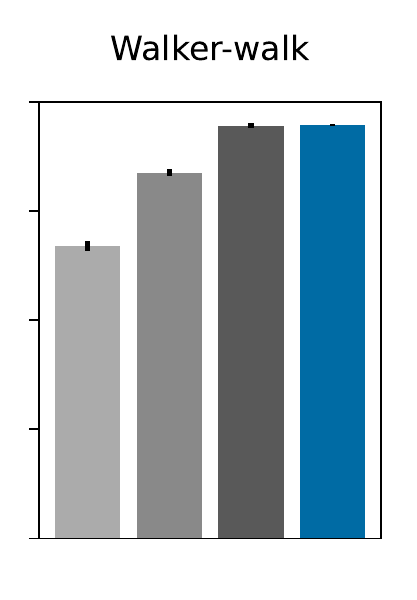}\\ 
    \caption{Mean of cumulative return over $5$ seeds of $\maxiteration$ on DM Control tasks~\citep{tunyasuvunakool2020dmcontrol}. Error-bars correspond to standard error. MaxIteration always selects the best performing constituent policy.}
    \label{fig:expres_dmc}
\end{figure}

\subsection{Computational Resources} \label{app:exp_compute}

Our experiments were conducted using a total of $17$ GPUs inclusing both server-grade (e.g., NVIDIA RTX A6000s) and consumer-grade (e.g., NVIDIA RTX 3090) GPUs. Training the constituent policies from offline data takes less than $2$ hours. Our MaxIteration algorithm takes about $3$ hours to train while the baseline fine-tuning takes around $1$ hour. A large chunk of the runtime cost stems from executing the simulator.

\clearpage

\end{document}